\documentclass[10pt,twocolumn,letterpaper]{article}

\usepackage[pagenumbers]{wacv} % To force page numbers, e.g. for an arXiv version

\definecolor{wacvblue}{rgb}{0.21,0.49,0.74}
\usepackage[pagebackref,breaklinks,colorlinks,allcolors=wacvblue]{hyperref}
\usepackage[capitalize]{cleveref}
\usepackage{multirow}
\usepackage{graphicx}
\usepackage[table,xcdraw]{xcolor}
\usepackage{hhline}
\usepackage{tabularx}
\usepackage{amsthm} 
\usepackage{amssymb}   
\usepackage{amsthm}  

\newtheorem{theorem}{Theorem}
\newtheorem{assumption}{Assumption}

\newtheorem{corollary}{Corollary}

\crefname{theorem}{Theorem}{Theorems}
\Crefname{theorem}{Theorem}{Theorems}
\crefname{assumption}{Assumption}{Assumptions}
\Crefname{assumption}{Assumption}{Assumptions}
\crefname{proposition}{Proposition}{Propositions}
\Crefname{proposition}{Proposition}{Propositions}
\crefname{corollary}{Corollary}{Corollaries}
\Crefname{corollary}{Corollary}{Corollaries}

\definecolor{lightgrey}{rgb}{0.92,0.92,0.92}
\def\wacvPaperID{863} % *** Enter the WACV Paper ID here
\def\confName{WACV}
\def\confYear{2027}

\newcommand{\rev}[1]{#1}

\title{Full Collapse Is Suboptimal: Controlled Partial Neural Collapse for Long-Tailed Recognition}

\author{
Hyeontaek Hwang\textsuperscript{\rm 1} \qquad
Cheonghwan Hur\textsuperscript{\rm 2} \qquad
Daeyoung Kim\textsuperscript{\rm 1} \\
\textsuperscript{\rm 1}KAIST\qquad
\textsuperscript{\rm 2}RTST Inc.\\
{\tt\small \textsuperscript{\rm 1}\{hwanght, kimd\}@kaist.ac.kr} \quad
{\tt\small \textsuperscript{\rm 2}princehur@rtst.co.kr}
}

\begin{document}

\maketitle
\begin{abstract}
Recent long-tailed recognition (LTR) methods incorporate neural collapse (NC) and align features to a fixed Equiangular Tight Frame (ETF) as a uniform target for all classes. However, forcing tail classes to fully collapse over-contracts their samples and weakens generalization. To address this, we propose a controlled partial neural collapse, realized through two coordinated controls on a backbone: the Class-Adaptive Sparse Activation (CASA) classifier and the ETF-constrained adaptive margin (ETF-AM) loss. Rather than forcing a uniform full collapse, this regime allows head classes to collapse along the ETF while tail classes retain within-class diversity. This prevents the tail from being forced into full collapse, which is suboptimal under imbalance. \rev{To set the dimensionality of collapse, CASA induces class-related node specialization through feature-conditioned gating so that tail classes occupy their own feature directions. For the clearance axis, ETF-AM applies a frequency-adaptive margin, with smaller margins for head classes and larger for tail classes.} On CIFAR100-LT, ImageNet-LT, Places-LT, and iNaturalist-2018, our method achieves the best overall accuracy over prior methods. In the empirical analysis, within-class spread increases monotonically from head to tail classes, consistent with tail classes retaining variability rather than fully collapsing.
\end{abstract}

\section{Introduction}
\label{sec:intro}

Real-world data often follows a long-tailed distribution, where head (Many) classes hold most samples, and tail (Few) classes hold only a handful~\cite{peng2025bmb,zhang2021distribution}. 
Models trained on such data are biased toward head classes and lose accuracy on tail classes, which limits their reliable deployment in safety-critical domains~\cite{gharaee2023step,wang2022parallel,zheng2024large}. 
Long-tailed recognition (LTR) studies how to train accurate models despite this imbalance~\cite{cao2019learning,cui2019class,hong2021disentangling,li2022long,lin2017focal,long2022retrieval,pan2021model,ren2020balanced,shi2023resampling,yun2019cutmix,zhang2024dynamic,zhang2018mixup,zhao2022adaptive}.

A recent line of LTR research draws on neural collapse (NC), which describes the terminal-phase geometry of a network trained on balanced data~\cite{papyan2020prevalence}. 
Under NC, within-class features collapse to their class mean, and the class means form a simplex Equiangular Tight Frame (ETF) that maximizes inter-class separation.
Several methods import this geometry into LTR by fixing the classifier to an ETF and training the features to match it~\cite{liu2023inducing,yang2022inducing} or by regularizing feature directions toward it~\cite{hasegawa2024exploring}. 
These methods pursue \emph{full} collapse for every class.

Full collapse, however, is the optimal geometry only for balanced data. 
Under imbalance, the global-optimal last-layer geometry deviates significantly from the balanced ETF.
For instance, minority classes may degenerate into a single shared direction~\cite{fang2021exploring} or, depending on the loss, scatter across multiple directions~\cite{yan2024neural}.
Additionally, finite regularization further distorts this structure away from the ideal ETF~\cite{thrampoulidis2022imbalance,dang2024neural}.

Forcing tail classes to collapse fully therefore over-contracts their samples and removes the within-class variance that tail generalization needs~\cite{hui2022limitations,galanti2022on}. 
Prior NC-based LTR methods inherit this problem because they target the same full collapse geometry across all classes.

We reframe LTR around \emph{controlled partial neural collapse}: head classes effectively collapse along the ETF while tail classes retain within-class diversity. 
The two requirements act on distinct NC properties, within-class collapse (NC1) and inter-class separation (NC2), which the standard NC framework quantifies separately~\cite{papyan2020prevalence}. 
The two interact through a trade-off that favors retaining a positive tail spread over full collapse.

\begin{figure*}[t]
  \centering
  \includegraphics[width=\textwidth]{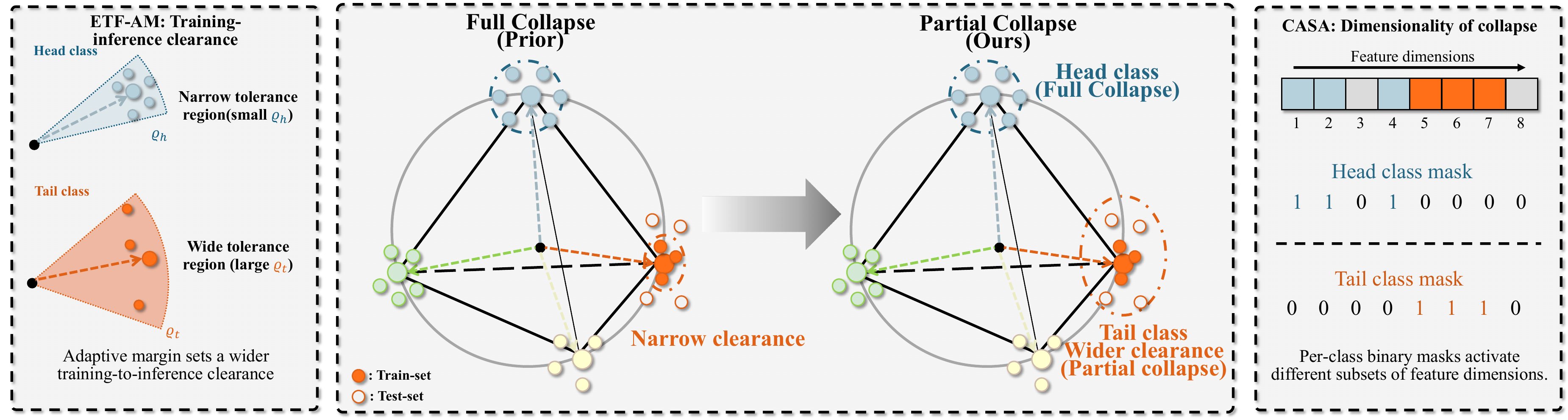}

  \caption{Controlled partial neural collapse via two coordinated mechanisms. Full collapse over-concentrates tail classes at their ETF vertices (center); our method addresses this by controlling the dimensionality of collapse via instance-conditioned node selection (CASA, right; \cref{sec:casa}) and the training-to-inference clearance per class (ETF-AM, left; \cref{sec:etfam}).}
  \label{fig:concept}
\end{figure*}

Based on this motivation, we prove that a positive within-class spread, rather than full collapse, lowers balanced test risk under a finite-sample class-conditional model. 
We realize controlled partial collapse with two coordinated controls on a backbone (\cref{fig:concept}). 
One control sets the \emph{dimensionality} of collapse. 
\rev{CASA uses feature-conditioned node-level gating, allowing different classes to develop distinct activation patterns.}
Different classes then occupy distinct feature directions instead of a single shared subspace.
\rev{The other control targets the training-to-inference clearance. Our ETF-constraint Adaptive Margin (ETF-AM) loss fixes the classifier to an ETF for inter-class separation and assigns larger margins to tail classes, widening their training-to-inference clearance.}

In summary, our contributions are as follows:
\begin{itemize}
    \rev{\item \textbf{Theoretical suboptimality of full collapse:} Under a finite-sample class-conditional model, we show that full collapse is not risk-optimal for classes below an explicit sample-count threshold, and that the optimal within-class spread is ordered by class frequency among those classes.}\rev{\item \textbf{Controlled Partial Collapse:} To effectively induce partial collapse, we propose two coordinated mechanisms: the CASA classifier gates feature nodes to dictate the \emph{dimensionality} of collapse, while the ETF-AM loss imposes an adaptive margin to set its clearance.}
    \item \textbf{Empirical validation:} Our method achieves the best overall accuracy over prior methods on CIFAR100-LT, ImageNet-LT, Places-LT, and iNaturalist-2018. The empirical analysis shows that further tail compression degrades generalization on tail classes.
\end{itemize}

\section{Background and Related Work}
\label{sec:related}

Existing approaches to LTR have primarily relied on re-weighting~\cite{guo2022learning,lan2025mixture,rangwani2022escaping,ren2020balanced,tan2021equalization,zhu2022balanced}, loss design~\cite{cui2019class,wu2020distribution}, logit adjustment~\cite{lan2025mixture,menon2021long,zhang2024dynamic,zhao2022adaptive}, and margin adjustment~\cite{cao2019learning,kang2021learning}. 
These methods apply margins within a learnable, unconstrained classifier~\cite{alshammari2022long,cao2019learning} and therefore lack an explicit geometric constraint on the final embedding structure~\cite{yang2022inducing}.

Several studies induce NC to improve long-tailed recognition~\cite{hasegawa2024exploring,liu2023inducing,yang2022inducing}. NC~\cite{papyan2020prevalence} emerges at the end of training on balanced data: within-class features collapse to one center (NC1), class centers form a simplex ETF (NC2), classifier weights align with the centers (NC3), and decisions follow the nearest center (NC4). 
These methods fix the classifier to an ETF, or regularize features toward it, and pursue full collapse for every class.

Theoretical analyses~\cite{fang2021exploring,thrampoulidis2022imbalance,dang2024neural, yan2024neural} show that the optimal geometry deviates from that of the balanced ETF when the data is imbalanced. 
The layer-peeled model exhibits minority collapse, in which minority-class weights degenerate into a single shared direction beyond an imbalance threshold~\cite{fang2021exploring}. 
Vanishing-regularization analysis shows the optimum follows a simplex-encoded-label geometry rather than the ETF~\cite{thrampoulidis2022imbalance}. 
Similarly, an unconstrained ReLU analysis confirms that class means become orthogonal with class-dependent lengths rather than equiangular~\cite{dang2024neural}. 
Recent work argues that minority classes are best represented by multiple centers rather than a single collapsed mean~\cite{yan2024neural}.

A complementary line of work studies inter-class geometry when the number of classes $C$ exceeds the feature dimension $d+1$. 
In this regime, a simplex ETF is geometrically impossible; prior work establishes that the optimal inter-class target is instead a \emph{generalized maximal-separation frame} with coherence approaching the Welch bound $\sqrt{(C-d)/(d(C-1))}$~\cite{gao2023grassmannian,jiang2024generalized,liu2023generalizing}. 
These works target the inter-class frame under full collapse. Our analysis is orthogonal to this choice, since it acts only on the within-class spread and holds for any maximal-separation target.

Recent methods continue to pursue \emph{full} NC recovery: All-around Neural Collapse (AllNC)~\cite{zhang2024all} restores collapse across activations, means, and classifier vectors. 
In parallel, Space Alignment~\cite{wang2025space} corrects feature-classifier misalignment to better reach the ETF. 
Both treat the balanced ETF as the target for every class. 
We take the complementary view: for tail classes, this target benefits from relaxation rather than full collapse. 
Moreover, achieving such full NC recovery with these methods relies on heavier training machinery, such as additional branches, contrastive objectives, or alignment modules, rather than on a single backbone.

\section{Methodology}
\label{sec:method}

Prior NC-based LTR methods operate on the inter-class axis, refining the mean geometry while keeping every class fully collapsed. 
In contrast, we achieve controlled partial collapse by intervening on the
within-class geometry along two distinct axes. On the \emph{dimensionality} axis, the CASA classifier (\cref{sec:casa}) uses node-level gating to give each class its own feature subspace rather than forcing all classes to share one.
On the \emph{clearance} axis, the ETF-AM loss (\cref{sec:etfam}) applies a per-class adaptive margin that sets a frequency-dependent training-to-inference clearance.
In this section, we first present the theoretical motivation for this design, then describe the main components, as illustrated in \cref{fig:concept}.

\paragraph{Theoretical Motivation.}
We utilize the unconstrained-features model (UFM), which treats last-layer features as free variables and is standard for NC analysis~\cite{dang2024neural,yang2022inducing}.
We consider $C$ classes with sample counts $n_1 \geq \cdots \geq n_C$. 
The class means align with a maximal-separation inter-class target, yielding a base separation margin $\gamma>0$ from the nearest decision boundary. 
% Our two controls act on the within-class geometry, and the partial-collapse analysis assumes only this geometry, so it holds regardless of the inter-class target. 

The within-class spread $s_c=\frac{1}{n_c}\sum_{i\in c}\|h_i-\mu_c\|^2$ measures how tightly features $h_i$ cluster around the class mean $\mu_c$; full collapse drives every $s_c$ to zero.
To reason about generalization, we adopt a finite-sample class-conditional model, defining the signed test margin of a class-$c$ sample as
\begin{equation}
\mathcal{M}_c(s_c) = (\gamma - a s_c) + Z_{\mathrm{test}} + Z_{\mathrm{est}}(s_c, n_c),
\label{eq:margin-model}
\end{equation}
\rev{with separation cost $-a s_c$, where $a>0$ is the signed distance lost per unit of spread, irreducible noise $Z_{\mathrm{test}}\sim\mathcal{N}(0,\sigma^2)$ with $\sigma^2>0$, and Gaussian finite-sample term $Z_{\mathrm{est}}$.} 
A sample is correctly classified when $\mathcal{M}_c(s_c)>0$, and $\mathcal{R}_c(s_c)=\Pr[\mathcal{M}_c(s_c)\le 0]$ is the per-class risk. 

On free features, the UFM optimum is full collapse.
But the UFM treats features as free variables with no sample count, so its optimum is independent of $n_c$ and cannot reflect how Few-sample classes generalize.

\rev{To handle this, we draw on recent transfer learning and generalization analyses~\cite{galanti2022on,galanti2022generalization}, which establish within-class variability as a key determinant of finite-sample error, with estimates that degrade as the number of available samples shrinks. We formalize this via the variance of the finite-sample term, which grows as $n_c$ decreases: raising $s_c$ from zero trades off separation against estimation variance.}
\rev{
\begin{assumption}[Finite-sample variance model]
\label{asm:reg}
Let $V(s_c,n_c)\triangleq\mathrm{Var}(Z_{\mathrm{est}})$, then
\begin{equation}
    V(s_c,n_c)=\phi(s_c)/n_c,
\end{equation}
where $\phi\ge0$ is differentiable on $s_c\ge0$ with $0<\phi(0)<\infty$ and $\phi'(0)<0$.
For the cross-class statement below, $a$, $\gamma$, $\sigma$, and $\phi$ are shared across classes; the class counts $n_c$ are the varying model parameters.

\end{assumption}
Since $\mathcal{M}_c$ is Gaussian, $\mathcal{R}_c(s_c)=\Phi(-g(s_c))$ with $g=(\gamma-as_c)/\sqrt{\sigma^{2}+V(s_c,n_c)}$. The condition $\mathcal{R}_c'(0)<0$, under which a small positive spread lowers the risk, holds exactly when
\begin{equation}
\label{eq:sensitivity}
-V_s(0,n_c) > \frac{2a\left[\sigma^{2}+V(0,n_c)\right]}{\gamma} ,
\end{equation}
where $V_s=\frac{\partial V}{\partial s_c}$. We call this the \emph{sensitivity condition} (\cref{sec:supp-ufm}).
}

\rev{\begin{theorem}[Suboptimality of full collapse]
\label{thm:partial}
Let $V$ be as in \cref{asm:reg}. For any class $c$ satisfying \cref{eq:sensitivity}, the per-class risk on $[0,\gamma/a]$ attains its global minimum in the interior. Writing $s_c^\star$ for the smallest global minimizer,
\begin{equation}
s_c^\star>0 \quad\text{and}\quad \mathcal{R}_c(0)>\mathcal{R}_c(s_c^\star).
\end{equation}
\end{theorem}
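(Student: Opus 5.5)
The argument is a compactness-plus-boundary-derivative argument on the interval $I=[0,\gamma/a]$. Write $D(s)=\sigma^2+V(s,n_c)=\sigma^2+\phi(s)/n_c$, which satisfies $D(s)\ge\sigma^2>0$ because $\phi\ge0$. Set $g(s)=(\gamma-as)/\sqrt{D(s)}$, so that $\mathcal{R}_c(s)=\Phi(-g(s))$.

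By \cref{asm:reg}, $\phi$ is differentiable and therefore continuous on $[0,\infty)$. Hence $g$, and with it $\mathcal{R}_c$, is continuous on the compact set $I$. The minimum of $\mathcal{R}_c$ on $I$ is thus attained. The set of minimizers is closed, so it has a smallest element $s_c^\star$. Since $\Phi$ is strictly increasing, minimizing $\mathcal{R}_c$ is the same as maximizing $g$ on $I$. The whole task therefore reduces to showing that neither endpoint of $I$ is the maximizer of $g$.

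\textbf{Right endpoint.} At $s=\gamma/a$ we have $g(\gamma/a)=0$. At $s=0$ we have $g(0)=\gamma/\sqrt{D(0)}>0$, using $\gamma>0$ and $D(0)<\infty$. So $\mathcal{R}_c(\gamma/a)=1/2>\mathcal{R}_c(0)$, and $\gamma/a$ is never a minimizer.

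\textbf{Left endpoint.} Differentiate at $0$:
\[
g'(0)=\frac{-a}{\sqrt{D(0)}}-\frac{\gamma\,V_s(0,n_c)}{2D(0)^{3/2}}
=\frac{-2aD(0)-\gamma V_s(0,n_c)}{2D(0)^{3/2}} .
\]
Because $\gamma>0$, the condition $g'(0)>0$ is exactly \cref{eq:sensitivity}. I expect to verify this equivalence once, explicitly, since it also justifies the remark that the sensitivity condition is equivalent to $\mathcal{R}_c'(0)<0$. Indeed, $\mathcal{R}_c'(0)=-\varphi(g(0))\,g'(0)$, where $\varphi$ denotes the standard normal density, and $\varphi(g(0))>0$.

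Given $g'(0)>0$ as a one-sided derivative, there is some $\delta\in(0,\gamma/a)$ with $g(\delta)>g(0)$, i.e. $\mathcal{R}_c(\delta)<\mathcal{R}_c(0)$. Hence $0$ is not a minimizer either.

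\textbf{Conclusion.} Every global minimizer lies in the open interval $(0,\gamma/a)$, so the minimum is attained in the interior. In particular $s_c^\star>0$. The inequality $\mathcal{R}_c(0)\ge\mathcal{R}_c(\delta)\ge\mathcal{R}_c(s_c^\star)$ holds, and the first inequality is strict, so $\mathcal{R}_c(0)>\mathcal{R}_c(s_c^\star)$.

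The only delicate point is the left-endpoint step. There we must use a one-sided derivative of $\phi$ at $0$, which is the sense of differentiability on $s_c\ge0$ given in \cref{asm:reg}. We then pass from a positive derivative to a strict local increase. This is elementary but must be stated carefully, because no second-order or convexity information is available. That lack is also why the statement is phrased in terms of the smallest global minimizer rather than uniqueness.
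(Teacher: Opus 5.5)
Your proposal is correct and follows essentially the same route as the paper: reduce to maximizing $g$ via strict monotonicity of $\Phi$, use continuity on the compact interval, rule out the right endpoint by $g(\gamma/a)=0<g(0)$ and the left endpoint by $g'(0)>0$ (which is exactly \cref{eq:sensitivity}), and take $s_c^\star$ as the smallest element of the closed argmax set. Your explicit use of only a one-sided derivative at $0$ is a slightly more careful rendering of the same step, not a different argument.
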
}

\rev{Because the balanced risk is the average of the separable class-wise risks, if at least one class satisfies \cref{eq:sensitivity}, the all-class zero-spread solution is not globally optimal within this model. \cref{thm:partial} leaves open which classes satisfy the condition; the following \cref{cor:scarcity} resolves it into a threshold on the training count.}

\rev{
\begin{corollary}[Scarcity-dependent spread]
    \label{cor:scarcity}
    Let $n^\star$ be the sample count at which \cref{eq:sensitivity} changes sign. Then \cref{eq:sensitivity} holds if and only if $0<n_c<n^\star$. If, in addition, $\phi'(s)<0$ for every $s\in(0,\gamma/a)$, then for any two classes $c,c'$ with $n_{c'}<n_c<n^\star$,
    \begin{equation}
    s_c^{\star}<s_{c'}^{\star}.
    \end{equation}
\end{corollary}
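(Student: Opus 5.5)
Both parts of \cref{cor:scarcity} follow from substituting $V(s,n)=\phi(s)/n$ into the sensitivity condition and, for the ordering, from a comparative-statics argument on the risk.

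\textbf{Threshold.} Substituting into \cref{eq:sensitivity} gives
\begin{equation*}
-\frac{\phi'(0)}{n_c} > \frac{2a}{\gamma}\Big(\sigma^2+\frac{\phi(0)}{n_c}\Big).
\end{equation*}
Multiplying by $n_c>0$, this is equivalent to $-\phi'(0)-\tfrac{2a}{\gamma}\phi(0) > \tfrac{2a\sigma^2}{\gamma}\,n_c$. The right side is strictly increasing and linear in $n_c$. The condition therefore holds if and only if $n_c<n^\star$, where
\begin{equation*}
n^\star \triangleq \frac{\gamma}{2a\sigma^2}\Big(-\phi'(0)-\frac{2a}{\gamma}\phi(0)\Big),
\end{equation*}
which is exactly where the condition changes sign. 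If $n^\star\le 0$, the set $(0,n^\star)$ is empty and the statement holds vacuously. This step is routine.

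\textbf{Ordering.} Since $\Phi$ is increasing, minimizing $\mathcal{R}_c$ is the same as maximizing $g(s,n)=(\gamma-as)/\sqrt{\sigma^2+\phi(s)/n}$. Equivalently, we maximize
\begin{equation*}
G(s,n)=\log(\gamma-as)-\tfrac12\log\!\big(\sigma^2+\phi(s)/n\big) \quad\text{on } [0,\gamma/a).
\end{equation*}
The endpoint $s=\gamma/a$ gives $g=0$, which is worse than the interior optimum guaranteed by \cref{thm:partial}. I will show that $G$ has strictly increasing differences in $(s,-n)$, i.e.\ $\partial^2 G/\partial s\,\partial n<0$ wherever $\phi'<0$. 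A direct computation gives
\begin{equation*}
\partial_s G = -\frac{a}{\gamma-as} - \frac{\phi'(s)}{2(n\sigma^2+\phi(s))}.
\end{equation*}
Differentiating in $n$:
\begin{equation*}
\partial_n\partial_s G = \frac{\phi'(s)\,\sigma^2}{2(n\sigma^2+\phi(s))^2}<0 \quad\text{for } s\in(0,\gamma/a),
\end{equation*}
using the extra hypothesis $\phi'<0$ (and $\phi'(0)<0$ at $s=0$). By Topkis' monotonicity theorem, the smallest maximizer is nondecreasing as $n$ decreases. This gives $s_c^\star\le s_{c'}^\star$ when $n_{c'}<n_c$.

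\textbf{Main obstacle: strictness.} Weak monotonicity is not enough; we need $s_c^\star<s_{c'}^\star$. The plan is as follows.
\begin{itemize}
\item By \cref{thm:partial}, $s_c^\star$ lies in the interior $(0,\gamma/a)$, and $G$ is differentiable there. So the first-order condition $\partial_s G(s_c^\star,n_c)=0$ holds.
\item Because $\partial_n\partial_s G<0$, we get $\partial_s G(s_c^\star,n_{c'})>0$ for $n_{c'}<n_c$. Hence $s_c^\star$ is not a critical point for class $c'$, and $G(\cdot,n_{c'})$ strictly increases just to the right of $s_c^\star$.
\item Suppose $s_{c'}^\star<s_c^\star$. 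Integrating $\partial_n\partial_s G<0$ gives the strict increasing-differences inequality
\begin{equation*}
G(s_c^\star,n_{c'})-G(s_{c'}^\star,n_{c'}) > G(s_c^\star,n_c)-G(s_{c'}^\star,n_c) \ge 0.
\end{equation*}
This contradicts the optimality of $s_{c'}^\star$ for class $c'$.
\item Equality $s_{c'}^\star=s_c^\star$ is excluded because $\partial_s G(s_c^\star,n_{c'})>0$ means $s_c^\star$ is not a maximizer for class $c'$.
\end{itemize}
Together these give the strict ordering $s_c^\star<s_{c'}^\star$. The one detail to verify is that strict increasing differences holds on all of $(0,\gamma/a)$. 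This requires $\phi'<0$ throughout the interval, which is exactly why the corollary adds that hypothesis.
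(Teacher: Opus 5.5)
Your proposal is correct and follows the paper's proof: the same threshold $n^\star=(-\gamma\phi'(0)-2a\phi(0))/(2a\sigma^2)$, and the same log objective (your $G$ is half the paper's $\Lambda=\log g^2$). It then uses, as the paper does, strictly decreasing differences in $(s,n)$ to get $s_c^\star\le s_{c'}^\star$ by contradiction and the interior first-order condition to exclude equality. The only difference is cosmetic: the paper checks decreasing differences directly from $\partial_n[\Lambda(s,n)-\Lambda(s',n)]$ and $\phi(s)<\phi(s')$, which avoids integrating your cross-partial, and your appeal to Topkis is redundant given the contradiction step.
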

}

\rev{The threshold is what \cref{eq:sensitivity} becomes once $V=\phi/n_c$ is substituted, and it is non-vacuous when $-\gamma\phi'(0)>2a\phi(0)$ (\cref{sec:supp-ufm-thm}). The ordering needs $\phi$ to decrease on the whole interval not only at zero. Within the shared-parameter model, both the threshold and the ordering are governed by $n_c$.}

\subsection{Class-Adaptive Sparse Activation Classifier}
\label{sec:casa}

\begin{figure}[t]
  \centering
  \includegraphics[width=0.95\linewidth]{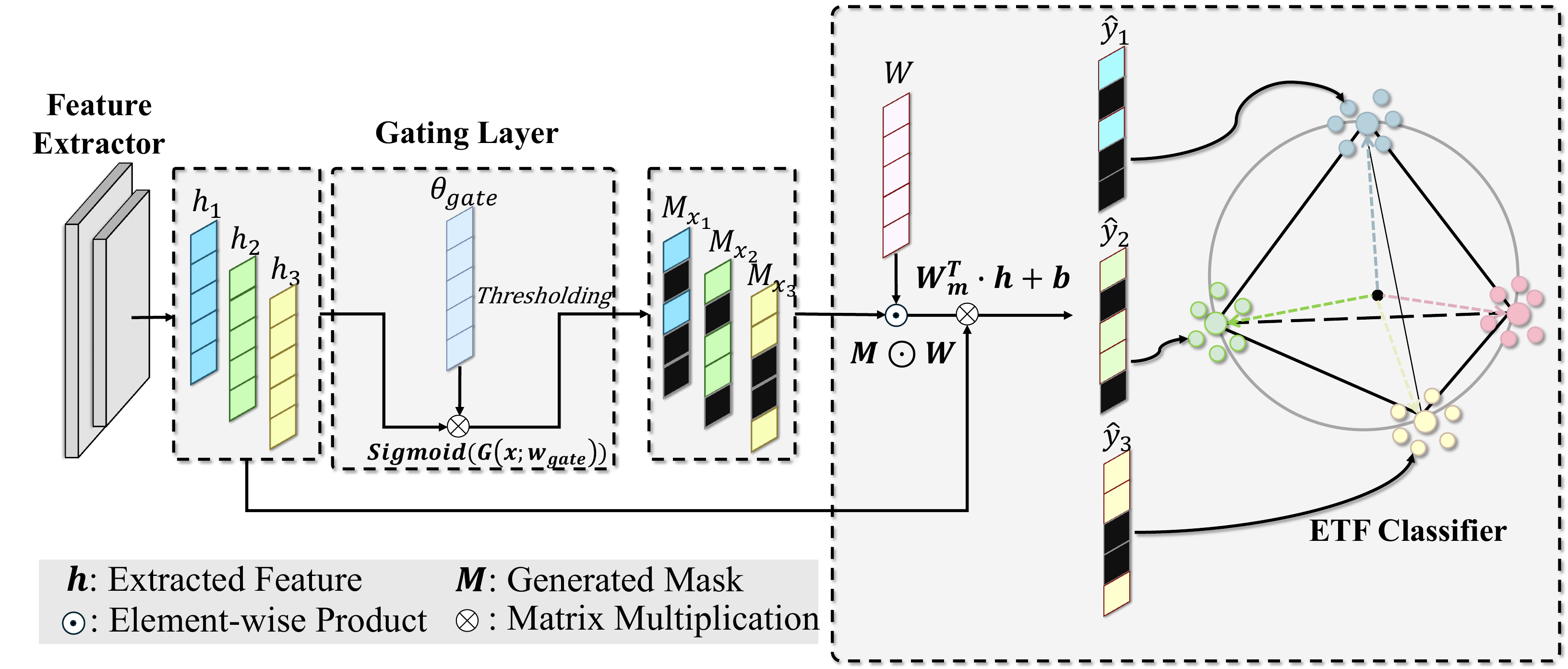}
  \caption{The overall architecture of the CASA classifier, which selects a subset of nodes through instance-conditioned node-level gating.}
  \label{fig:arch}
\end{figure}

\subsubsection{Detailed Architecture}
The CASA classifier controls the \emph{dimensionality} of collapse.
Forcing every class to collapse within the same shared feature subspace over-contracts tail classes, since their features are squeezed into a subspace shared across classes rather than their own dedicated direction~\cite{yan2024neural}.
As shown in~\cref{fig:arch}, CASA instead assigns each class its own subset of feature dimensions through a node-level gating layer, so head and tail classes share the fixed ETF target for their means but operate in distinct subspaces around it.

CASA incorporates a node-level gating layer, a single fully connected layer integrated within the main classification layer, which selectively activates a subset of specialized nodes for each instance (\cref{eq:softmask}):
\begin{equation}
\label{eq:softmask}
M_{\text{soft}} = \text{Sigmoid}(G(h;\,\omega_{\text{gate}})),
\end{equation}
where $M_{\text{soft}} \in \mathbb{R}^d$ represents the soft mask vector, $G(\cdot)$ denotes the gating network, $h = f(x;\,\omega)$ is the feature produced by the feature extractor for input $x$, and $\omega_{\text{gate}}$ are the trainable parameters of the gating layer.

During training, $M_{\text{soft}}$ is discretized using a thresholding operation to produce the binary mask $M$ for class-related node selection:
\begin{equation}
\label{eq:mask}
M = \begin{cases} \mathbb{I}_{\{m_i > 0.5\}}(M_{\text{soft}}) & \text{if training} \\ M_{\text{soft}} & \text{otherwise} \end{cases},
\end{equation}
where $m_i$ is the $i$-th element of $M_{\text{soft}}$ and $\mathbb{I}$ is the indicator function. 
The hard threshold during training drives class-specific node selection; the soft mask at test aggregates node contributions smoothly for stable generalization.

The mask is applied along the node dimension, $W_m = M \odot W$, and the forward pass is $\hat{y} = W_m^\top h + b$. 
Since $M$ acts on the node dimension shared by all classes, this is equivalent to masking the feature: 
\begin{equation}
\label{eq:masked-feat}
    W_m^\top h = W^\top (M \odot h),
\end{equation}
where the classifier weights are fixed to the ETF structure. 
CASA selects the subspace in which each class's features collapse.

\rev{\subsubsection{Instance-Conditioned Gating and Its Class Structure}}
\label{sec:casa-gating}
\rev{The gate is conditioned on the instance feature $h$ and trained jointly with the classifier.}
When the masked weights on the classifier are $W_m = M \odot W$, the prediction is
\begin{equation}
\label{eq:cls-gate}
\hat{y} = f_{\text{cls}}(h;W_m),
\end{equation}
where $\hat{y}$ is the prediction and $f_{\text{cls}}(\cdot)$ is that classifier. 
\rev{Optimizing it with a classification loss such as cross-entropy gives:
\begin{equation}
\label{eq:lcls}
\mathcal{L}_{\text{cls}} = -\sum_{i=1}^{N} y_i^\top \log f_{\text{cls}}(h_i; W_m),
\end{equation}
where $y_i$ is the one-hot ground-truth label and $h_i$ is the feature of sample $x_i$.}
Because the mask is produced from the feature and the loss is minimized class-wise, samples of the same class receive correlated gate inputs, which empirically yields similar masks within a class.

During the backward pass, we use a straight-through estimator \cite{bengio2013estimating,srinivas2017training}. 
\rev{The hard threshold is bypassed on the backward path, and gradients flow only through the weights of activated nodes (\cref{eq:bwd}):}
\begin{equation}
\label{eq:bwd}
\frac{\partial \mathcal{L}}{\partial \omega_{\text{gate}}} = \mathbb{I}_{\{m_i > 0.5\}} \cdot \frac{\partial \mathcal{L}}{\partial m_i} \cdot \frac{\partial m_i}{\partial \omega_{\text{gate}}}.
\end{equation}

\rev{This selective gradient flow restricts updates to the activated nodes.}
Furthermore, we employ a spike-and-slab prior~\cite{srinivas2017training} on the gating mechanism to encourage sparse masks. The prior term is formulated as:
\begin{equation}
\label{eq:ssprior}
\mathcal{L}_g = \vartheta(1 - m) + (1 - \vartheta)\frac{m^2}{2\varpi^2}.
\end{equation}

Here $\vartheta$ weighs the spike-and-slab term and $\varpi^2$ is the slab variance: the spike term pushes nodes toward activation while the slab term prevents the gate from saturating, balancing sparsity against signal.
Unlike attention-based or feature reweighting methods, \rev{the hard threshold and sparse gradient flow activate a sparse subset of nodes rather than reweighting all nodes.}

\subsection{ETF-Constrained Adaptive Margin Loss}
\label{sec:etfam}
\rev{Under the additional monotonicity condition, \cref{cor:scarcity} gives a wider risk-optimal spread to scarcer classes. Following this ordinal prediction, the ETF-AM loss combines a fixed ETF classifier with a per-class adaptive margin, giving scarcer classes a wider training-to-inference clearance.}

\subsubsection{ETF-Constrained Margin Formulation}
\label{sec:etfam-margin}
Consider a multi-class classification problem with $C$ classes, where the number of samples in class $c$ is denoted by $n_c$. 
We define an adaptive margin for each class as:
\begin{equation}
\label{eq:margin}
\varrho_c = \frac{\Gamma}{n_c^\varsigma},
\end{equation}
where $\Gamma > 0$ is the margin scale fixed by the ETF geometry below. \rev{We set $\varsigma = 0.25$ following the frequency scaling of LDAM~\cite{cao2019learning}. The exponent is a design choice rather than a rate derived from \cref{thm:partial,cor:scarcity}.}
% The maximum margin is determined by the ETF geometry. 
Under the ETF constraint, the Euclidean distance between any two distinct class weight vectors is
\begin{equation}
\label{eq:dist}
\|w_c - w_{c'}\| = \sqrt{2 + \tfrac{2}{C - 1}}, \quad c \neq c'.
\end{equation}
Placing the decision boundary at the midpoint yields 
\begin{equation}
\label{eq:maxmargin}
\Gamma = \tfrac{1}{2}\|w_c - w_{c'}\| \to \tfrac{\sqrt{2}}{2} \text{ as } C \to \infty,
\end{equation}
with the full derivation in \cref{sec:supp-margin} of the supplementary material.
\rev{Tail classes receive a larger $\varrho_c$ and correspondingly larger clearance, while head classes keep a small margin.}

We adopt an ETF classifier that enforces the constraints:
\begin{equation}
\label{eq:etf}
\|w_c\| = 1 \quad \text{and} \quad \langle w_c, w_{c'} \rangle = -\frac{1}{C - 1} \quad \text{for } c \neq c',
\end{equation}
which guarantees that all class weight vectors $w_c$ are uniformly distributed on the unit hypersphere.
Given a feature embedding $h \in \mathbb{R}^d$ extracted by a feature extractor $f(x;\,\omega)$, the logit for class $c$ is computed as:
\rev{\begin{equation}
\label{eq:logit}
% z_c = w_c^\top h,
z_c = w_c^\top u + b_c, \quad u = M \odot h,
\end{equation}}
\rev{where $u$ is the gated feature of \cref{eq:masked-feat}.}
We then modify the logit for the true class $y$ by subtracting the corresponding margin:
\begin{equation}
\label{eq:adjlogit}
\hat{z}_c = \begin{cases} z_c - \varrho_c, & \text{if } c = y, \\ z_c, & \text{otherwise} \end{cases}.
\end{equation}
\rev{Subtracting $\varrho_y$ makes the training condition $z_y-z_k>\varrho_y$ stricter than the inference rule $z_y-z_k>0$. The two hyperplanes are parallel and separated by $\Delta_{yk}=\varrho_y/\lVert w_y-w_k\rVert$. By \cref{eq:dist}, this distance is independent of the competitor $k$, giving a single per-class clearance $\Delta_y=\varrho_y/(2\Gamma)$. A feature that meets the training condition therefore retains at least $\Delta_y$ in signed distance once the margin is removed. The margin is imposed on the gated feature, and the ETF fixes $\lVert w_y-w_k\rVert$ irrespective of the mask, so gating leaves $\Delta_y$ unchanged.}

\rev{While the frequency scaling follows LDAM~\cite{cao2019learning}, the fixed frame changes what the margin means geometrically. Under a learnable head the same $\varrho_y$ maps to a different distance for each competitor $k$, whereas here it maps to a single per-class clearance $\Delta_y$.}
The margin-adjusted loss is defined via the standard cross-entropy formulation:
\begin{equation}
\label{eq:etfam}
\mathcal{L}_{\text{ETF-AM}} = -\frac{1}{N}\sum_{i=1}^{N} \log \frac{\exp(z_{y_i} - \varrho_{y_i})}{\exp(z_{y_i} - \varrho_{y_i}) + \sum_{c \neq y_i}\exp(z_c)},
\end{equation}
where $N$ is the number of samples, $y_i$ denotes the ground-truth label for sample $i$ and $\varrho_{y_i}$ is the adaptive margin for that class.
Because the ETF weights are fixed at maximal separation (\cref{eq:etf}), the network can reduce $\mathcal{L}_{\text{ETF-AM}}$  only through the backbone and the gate, not by rotating the classifier.
\rev{A further discussion and clearance derivation are in \cref{sec:supp-clearance}.}

The overall loss function integrates the classification loss and the sparse regularization term induced by the spike-and-slab prior:
\rev{\begin{equation}
\label{eq:total}
\mathcal{L} = \mathcal{L}_{\text{ETF-AM}} + \mathcal{L}_g + \beta \cdot\lVert h\rVert^2,
\end{equation}
where $\mathcal{L}_g$ corresponds to the sparse regularization in \cref{eq:ssprior}, $\beta$ is the coefficient of the feature-norm regularizer.}
% The overall loss combines the ETF-AM loss with the sparse regularization term.
Together, the $\mathcal{L}_{\text{ETF-AM}}$ and $\mathcal{L}_g$ terms act on the dimensionality of collapse and clearance, thereby promoting partial collapse.

\section{Experiments}
\label{sec:exp}

\subsection{Experiment Setup}

\subsubsection{Metric}
We use top-1 accuracy~\cite{alshammari2022long} and, for detailed analysis, the standard split into Many ($> 100$ samples), Medium ($> 20$ and $\leq 100$), and Few ($\leq 20$) subsets, following previous works~\cite{alshammari2022long,cao2019learning,liu2019large}.

Following Hasegawa and Sato~\cite{hasegawa2024exploring}, we also use the FDR score~\cite{fisher1936use}, the ratio of between-class scatter to within-class scatter, which quantifies the emergence of NC; mask-consistency metrics (cosine similarity~\cite{mikolov2013efficient}, symmetric KL~\cite{kullback1951information}) are defined in \cref{sec:supp-metrics} of the supplementary material. In our partial collapse setting, FDR primarily tracks inter-class separation (NC2/NC3); within-class relaxation is measured separately via the per-class spread of~\cref{sec:partial-evidence}.

\subsubsection{Dataset}
We use four datasets to evaluate the proposed method on long-tailed classification tasks.

\textbf{CIFAR100-LT.} CIFAR100-LT down-samples CIFAR100~\cite{krizhevsky2009learning} with imbalance factor $\rho=\text{max}(n_c)/\text{min}(n_c)$, the ratio between the largest and smallest class sizes~\cite{alshammari2022long}.

\textbf{ImageNet-LT.} Following~\cite{liu2019large}, we construct ImageNet-LT by subsampling the original ImageNet~\cite{deng2009imagenet} using a Pareto distribution with a power value $\alpha = 6$~\cite{arnold2014pareto}, resulting in 5 to 1{,}280 images per class for training.

\textbf{Places-LT.} Following the same procedure, Places-LT applies a Pareto distribution ($\alpha = 6$) to the 365 classes of Places~\cite{zhou2017places}, with class sizes from 5 to 4{,}980.

\textbf{iNaturalist-2018.} iNaturalist-2018~\cite{inaturalist} is a naturally long-tailed fine-grained dataset (imbalance factor $\rho$ of $500$) with $C=8{,}142$ classes, 437K training and 24K validation images.

\subsubsection{Implementation Details}
We use ResNet34~\cite{he2016deep} for CIFAR100-LT, ResNeXt50~\cite{xie2017aggregated} for ImageNet-LT, and ResNet152~\cite{he2016deep} for Places-LT, following the implementation of \citet{alshammari2022long,hasegawa2024exploring,zhang2022self}.
%(the ETF method of~\cite{hasegawa2024exploring} uses ResNet-34, as does the official implementation of WB~\cite{alshammari2022long}); the remaining CIFAR baselines use the smaller CIFAR-style ResNet-32, as noted in \cref{tab:cifar}. 
%We use ResNeXt50~\cite{xie2017aggregated} for ImageNet-LT and ResNet152~\cite{he2016deep} for Places-LT, following~\citet{zhang2022self}. 
For iNaturalist-2018, we use ResNet50~\cite{he2016deep} ($d=2{,}048$); since $C=8{,}142>d+1$, \rev{no simplex ETF exists, so we train the classifier directly on the features while keeping CASA gating and the ETF-AM loss with tuned $\Gamma$.}
Models are trained on long-tailed sets and evaluated on balanced test sets. 
Following prior works~\cite{liu2019large,zhang2022self,zhao2023mdcs}, Places-LT uses ImageNet-1k pre-trained weights from PyTorch~\cite{paszke2017automatic}; CIFAR100-LT and ImageNet-LT train from scratch.
We apply post-training logit adjustment (LA)~\cite{hasegawa2024exploring,kim2020adjusting}. Main results (\crefrange{tab:cifar}{tab:inat}) use a fixed seed, following the protocol of the cited baselines.
Further training settings are in \cref{sec:supp-train} of the supplementary material. 
% Code and configurations will be released upon acceptance.

\begin{table}[t]
\centering\footnotesize
\caption{Top-1 accuracy on the test set of CIFAR100-LT ($\rho=100$). Baseline numbers are taken from the original papers; ``-'' denotes results not reported in the original papers. \textbf{Bold} marks the best and \underline{underline} the second-best result. \rev{$\dagger$ indicates result with ResNet34}}
\label{tab:cifar}
  \begin{tabular*}{\linewidth}{@{\extracolsep{\fill}}l|ccc|c}
\toprule
Method & Many & Medium & Few & All \\
\midrule
CE (Baseline) & 64.60 & 36.90 & 11.90 & 38.50 \\
CB~\cite{cui2019class}           & --    & --    & --    & 39.60 \\
LDAM~\cite{cao2019learning}         & --    & --    & --    & 42.04 \\
BALMS~\cite{ren2020balanced}       & --    & --    & --    & 50.80 \\
VS+SAM~\cite{rangwani2022escaping}      & 56.40 & 48.80 & \underline{31.70} & 46.60 \\
GCL~\cite{li2022long}         & --    & --    & --    & 48.70 \\
WB~\cite{alshammari2022long} $\dagger$                    & 72.60 & 51.86 & \textbf{32.63} & \underline{53.35} \\
RBL \cite{gao2023feature} & -- & -- & -- & 53.10 \\
NC-DRW-cRT \cite{liu2023inducing}  & -- & -- & -- & 48.70 \\
DLLA~\cite{zhang2024dynamic}        & --    & --    & --    & 50.08 \\
ETF~\cite{hasegawa2024exploring} $\dagger$                   & \underline{74.20} & \underline{52.90} & 29.90 & 53.00 \\
BIB~\cite{lan2025mixture}        & --    & --    & --    & 44.90 \\
MBIB~\cite{lan2025mixture}        & --    & --    & --    & 47.50 \\
\textbf{Ours} $\dagger$                        & \textbf{74.71} & \textbf{55.77} & 31.65 & \textbf{54.73} \\
\bottomrule
\end{tabular*}
\end{table}

\begin{table}[t]
\centering\footnotesize
\caption{Top-1 accuracy on the test set of ImageNet-LT with ResNeXt50. Baseline numbers are taken from the original papers; ``–'' denotes results not reported in the original papers. \textbf{Bold} marks the best and \underline{underline} the second-best result.}
\label{tab:imagenet}
  \begin{tabular*}{\linewidth}{@{\extracolsep{\fill}}l|ccc|c}
\toprule
Method & Many & Medium & Few & All \\
\midrule
CE (Baseline)                       & \underline{65.90} & 37.50 & 7.70  & 44.40 \\
LDAM~\cite{cao2019learning}                    & --    & --    & --    & 51.30 \\
ALA~\cite{zhao2022adaptive}                    & 64.10 & 49.90 & 34.70 & 53.30 \\
WB~\cite{alshammari2022long}                      & 62.50 & 50.40 & \textbf{41.50} & 53.90 \\
RBL \cite{gao2023feature} & 64.80 & 49.60 & 34.20 & 53.30 \\
NC-DRW-cRT \cite{liu2023inducing} & 65.60 & 51.20 & 35.40 & 54.20 \\
DLLA~\cite{zhang2024dynamic}                   & --    & --    & --    & 54.50 \\
ETF~\cite{hasegawa2024exploring}                    & 62.40 & 49.30 & 36.00 & 52.60 \\
BIB~\cite{lan2025mixture}                   & 64.70 & 51.20 & 32.70 & 53.90 \\
MBIB~\cite{lan2025mixture}                   & \textbf{67.00} & \underline{52.80} & 33.50 & \underline{55.70} \\
\textbf{Ours}                       & 65.20 & \textbf{55.35} & \underline{38.87} & \textbf{56.90} \\
\bottomrule
\end{tabular*}
\end{table}
\subsection{SOTA Benchmark}

\cref{tab:cifar,tab:imagenet,tab:places} show that our method achieves the best overall accuracy over prior approaches on all three benchmarks: 54.73\% (+1.38\%p) on CIFAR100-LT, 56.90\% (+1.20\%p) on ImageNet-LT, and 41.20\% (+0.56\%p) on Places-LT. 
The gains are concentrated in the Medium and Few splits, while Many-class accuracy is preserved. These results are consistent with the frequency-adaptive margin, which assigns larger margins to underrepresented classes.

\paragraph{iNaturalist-2018.}
\cref{tab:inat} reports results on iNaturalist-2018, the natural fine-grained long-tail benchmark. Our method achieves 73.50\% overall top-1 accuracy, outperforming prior methods. 
% This suggests that the partial-collapse design transfers to a natural long-tail with $\rho \approx 500$.
We utilize a learnable classifier because no simplex ETF exists. 
The partial-collapse still applies since \cref{thm:partial} is agnostic to the inter-class target, and tail classes retain larger within-class spread than head classes here as well (\cref{tab:spread}). This will be discussed more in~\cref {sec:partial-evidence}.

\paragraph{Footprint.} 
Our method adds only a node-level gating layer. 
The gating layer and classifier scale with the embedding dimension $d$, adding 0.3M parameters at $d{=}512$ (CIFAR100-LT) and up to 4.2M at $d{=}2{,}048$ (ImageNet-LT, Places-LT, iNaturalist), while FLOPs increase by under 0.1\% across all settings as shown in \cref{tab:footprint}.

\begin{table}[t]
\centering\footnotesize
\caption{Top-1 accuracy on the test set of Places-LT with ResNet152. Baseline numbers are taken from the original papers; ``–'' denotes results not reported in the original papers. \textbf{Bold} marks the best and \underline{underline} the second-best result.}
\label{tab:places}
  \begin{tabular*}{\linewidth}{@{\extracolsep{\fill}}l|ccc|c}
\toprule
Method & Many & Medium & Few & All \\
\midrule
CE (Baseline)                       & \textbf{45.70} & 27.30 & 8.20  & 30.20 \\
OLTR~\cite{liu2019large}                   & \underline{44.70} & 37.00 & 25.30 & 35.90 \\
D-LWS~\cite{kang2020decoupling}                  & 40.60 & 39.10 & 28.60 & 37.60 \\
BALMS~\cite{ren2020balanced}                  & 41.20 & 39.80 & 31.60 & 38.70 \\
DisAlign~\cite{zhang2021distribution}               & 40.40 & \textbf{42.40} & 30.10 & 39.30 \\
LADE~\cite{hong2021disentangling}                   & 42.80 & 39.00 & 31.20 & 38.80 \\
GCL~\cite{li2022long}                    & --    & --    & --    & \underline{40.64} \\
RBL \cite{gao2023feature} & 44.10 & 40.70& 24.40 & 38.80 \\
DLLA~\cite{zhang2024dynamic}                   & 43.90 & 40.10 & \underline{32.90} & 40.10 \\
\textbf{Ours}                       & 44.10 & \underline{41.30} & \textbf{35.50} & \textbf{41.20} \\
\bottomrule
\end{tabular*}
\end{table}

\begin{table}[t]
\centering\footnotesize
\caption{Top-1 accuracy on iNaturalist-2018 with ResNet50. Baseline numbers are taken from the original papers; ``–'' denotes results not reported in the original papers. \textbf{Bold} marks the best and \underline{underline} the second-best result.}
\label{tab:inat}
  \begin{tabular*}{\linewidth}{@{\extracolsep{\fill}}l|ccc|c}
\toprule
Method & Many & Medium & Few & All \\
\midrule
CE (Baseline) & \textbf{75.70} & 66.90 & 61.70 & 65.80 \\
cRT~\cite{kang2020decoupling}          & --   & --   & --   & 65.20 \\
LWS~\cite{kang2020decoupling}          & --   & --   & --   & 65.90 \\
MiSLAS~\cite{zhong2021improving} & --   & --   & --   & 71.60 \\
GCL~\cite{li2022long}          & --   & --   & --   & 72.00 \\
WB~\cite{alshammari2022long} & 71.00 & 70.30 & 69.40 & 70.00 \\ 
DLLA~\cite{zhang2024dynamic} & -- & -- & -- & 71.30 \\
BIB~\cite{lan2025mixture} & 73.90 & 72.90 & 72.10 & 72.70 \\
MBIB~\cite{lan2025mixture} & 72.60 & \underline{73.60} & \underline{73.10} & \underline{73.30} \\
\textbf{Ours} \rev{\scriptsize{(learnable head)}}             & \underline{74.00} & \textbf{73.90} & \textbf{73.30} & \textbf{73.50} \\
\bottomrule
\end{tabular*}
\end{table}

\begin{table}[t]
\centering\footnotesize
\caption{Parameter count and FLOPs of the proposed method, and the Vanilla model. (The number of parameters and FLOPs are measured with \texttt{ptflops})}
\label{tab:footprint}
\begin{tabular*}{\linewidth}{@{\extracolsep{\fill}}@{}llcccc@{}}
\toprule
& & \multicolumn{2}{c}{Vanilla} & \multicolumn{2}{c}{Ours} \\
\cmidrule(lr){3-4}\cmidrule(lr){5-6}
Dataset & Backbone & \#param & FLOPs & \#param & FLOPs \\
\midrule
CIFAR100-LT & ResNet34   & 21.3M & 2.3G  & 21.6M  & 2.3G  \\ %0.56M 133.2M
ImageNet-LT & ResNeXt50  & 25.0M & 8.6G  & 29.2M  & 8.6G   \\
Places-LT   & ResNet152  & 60.2M & 23.2G & 64.4M  & 23.2G  \\
iNaturalist & ResNet50   & 40.2M & 8.2G  & 44.4M & 8.2G  \\
\bottomrule
\end{tabular*}
\end{table}

\subsection{Ablation Study}
\label{sec:ablation}

We conduct various ablation studies on the ETF-AM loss and the gating layer, as summarized in \cref{tab:ablation}.

\paragraph{Ablation of key components.} 
As shown in \cref{tab:ablation}, omitting ETF-AM leaves Many nearly unchanged (74.88\%) but drops Medium and Few by 2.4 and 4.9 points (overall 52.42\%).
Removing the gating layer reduces accuracy to 53.60\%, confirming that the masks aid inter-class separation.

These drops reflect that each component aligns with its role. Removing ETF-AM hurts the tail the most, consistent with keeping tail classes away from full collapse. Removing the gate yields a smaller, more uniform drop, consistent with the dimensionality of the axis.

\paragraph{Capacity-matched gating and margin baselines.} 
To isolate whether CASA's gain comes from added parameters or per-class node selection, we compare parameter-matched variants on CIFAR100-LT in~\cref{tab:ablation}.
CASA (54.73\%) outperforms the soft-attention gate (52.90\%) and the two-layer MLP head (48.80\%), demonstrating the effectiveness of the gating layer.
Replacing the adaptive margin with a fixed margin $\Gamma = 0.71$ drops accuracy to 52.80\%. The tail gain therefore comes from scaling the margin with class frequency, not from the margin alone.
Extended experimental analysis, including an ablation study on a larger dataset and qualitative analysis, is provided in \cref{sec:supp-ext} of the supplementary materials.

\begin{table}[t]
\setlength{\tabcolsep}{0.4mm} 
\centering
\caption{Component ablation on CIFAR100-LT: FDRs and Top-1 accuracy. \textbf{Bold} marks the best.}
\label{tab:ablation}
\scriptsize
% \resizebox{\columnwidth}{!}{%
\begin{tabular}{@{}l|cc|cccc@{}}
\toprule
Method & FDR (Train) & FDR (Test) & Many & Medium & Few & Avg. \\
\midrule
Full model & $9.94{\times}10^4$ & $1.28{\times}10^2$ & 74.71 & \textbf{55.77} & \textbf{31.65} & \textbf{54.73} \\ \midrule
 \rowcolor{lightgrey}\multicolumn{7}{l}{\textbf{Ablation on key components}}\\ \midrule

w/o ETF-AM                     & $3.36{\times}10^4$ & $1.23{\times}10^2$ & \textbf{74.88} & 53.34 & 26.74 & 52.42 \\
w/o Gating layer               & $9.46{\times}10^4$ & $1.24{\times}10^2$ & 74.20 & 54.10 & 30.60 & 53.60 \\  
\midrule
 \rowcolor{lightgrey}\multicolumn{7}{l}{\textbf{Capacity-matched gating and margin baselines}}\\ \midrule
 2-layer MLP & $3.84\times10^4$ & $1.05\times10^2$ & 70.30 & 45.00 & 29.50 & 48.80  \\ 
 Soft Attn. & $7.09\times10^4$ & $1.23\times10^2$ & 74.40 & 53.40 & 29.20 & 52.90  \\ 
Fixed margin ($\Gamma=0.71$) & $6.23{\times}10^4$ & $1.22{\times}10^2$ & 73.70 & 52.90 & 29.60 & 52.80  \\

\bottomrule
\end{tabular}%
% }
\end{table}

\begin{table}[t]
\centering\footnotesize
\setlength{\tabcolsep}{9pt}

\caption{Within-class spread $s_c$, averaged within each frequency split and normalized to the Many-class mean. In both datasets, the spread increases monotonically from head to tail, and ETF-AM maintains a larger tail spread than the CE baseline.}
\label{tab:spread}
  \begin{tabular}{@{}l|l|ccc@{}}
\toprule
 Dataset & Loss & Many & Medium & Few \\
\midrule
\multirow{2}{*}{CIFAR100-LT}   & ETF-AM     & 1.00 & 1.40 & 1.76 \\
                            & CE     & 1.00 & 1.18 & 1.33 \\ \midrule
\multirow{2}{*}{iNaturalist-2018}   & ETF-AM & 1.00 & 1.38 & 1.40 \\
                                    & CE & 1.00 & 1.04 & 1.01 \\
\bottomrule
\end{tabular}
\end{table}

\begin{figure*}[tb]
  \centering
  \includegraphics[width=0.75\linewidth]{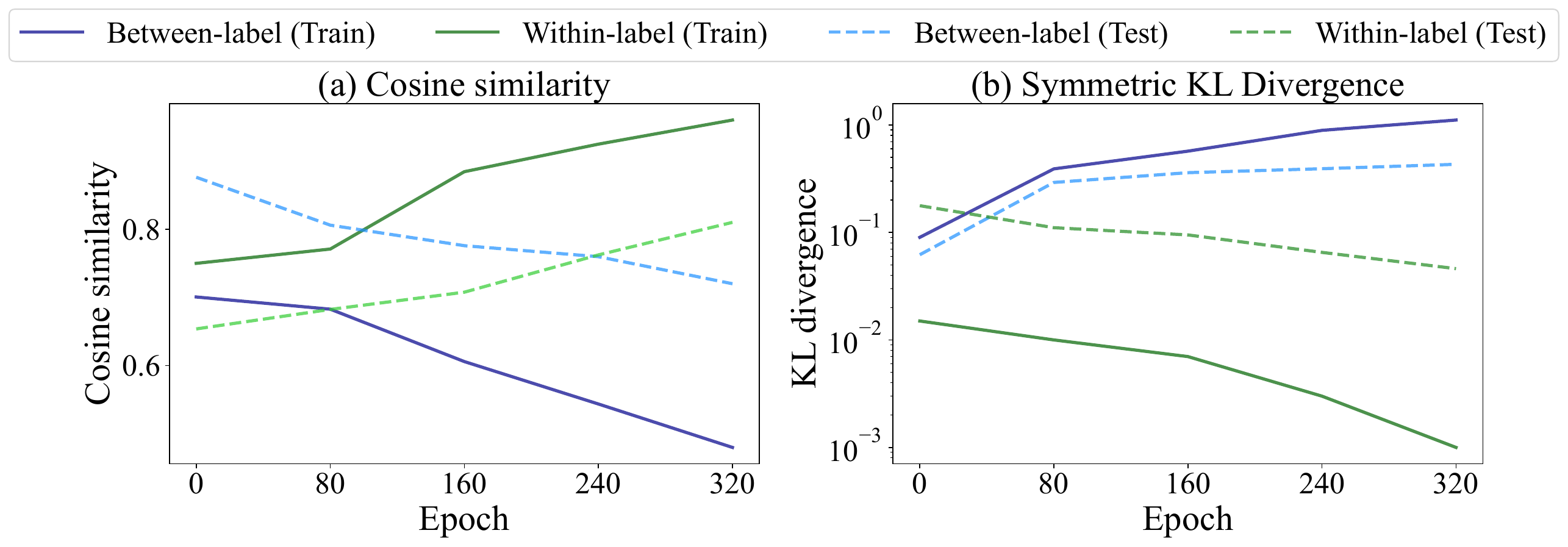}
  \caption{Epoch-wise cosine similarity (a) and symmetric KL divergence (b) of generated masks on CIFAR100-LT. \rev{Within-class masks grow more similar over training, whereas between-class masks separate.}}
  \label{fig:epoch-KL-Cos}
\end{figure*}

\begin{figure*}[tbh]
  \centering
  \includegraphics[width=0.45\linewidth]{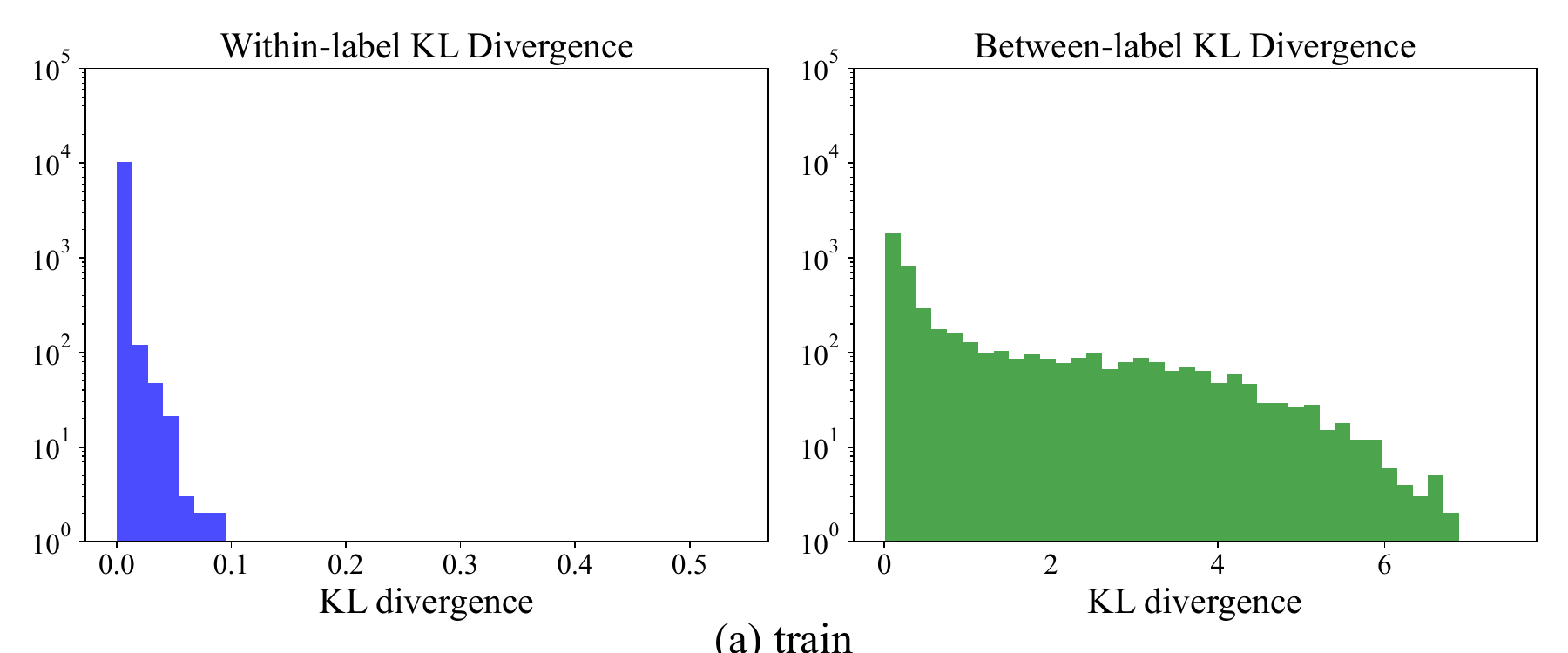}
  \includegraphics[width=0.45\linewidth]{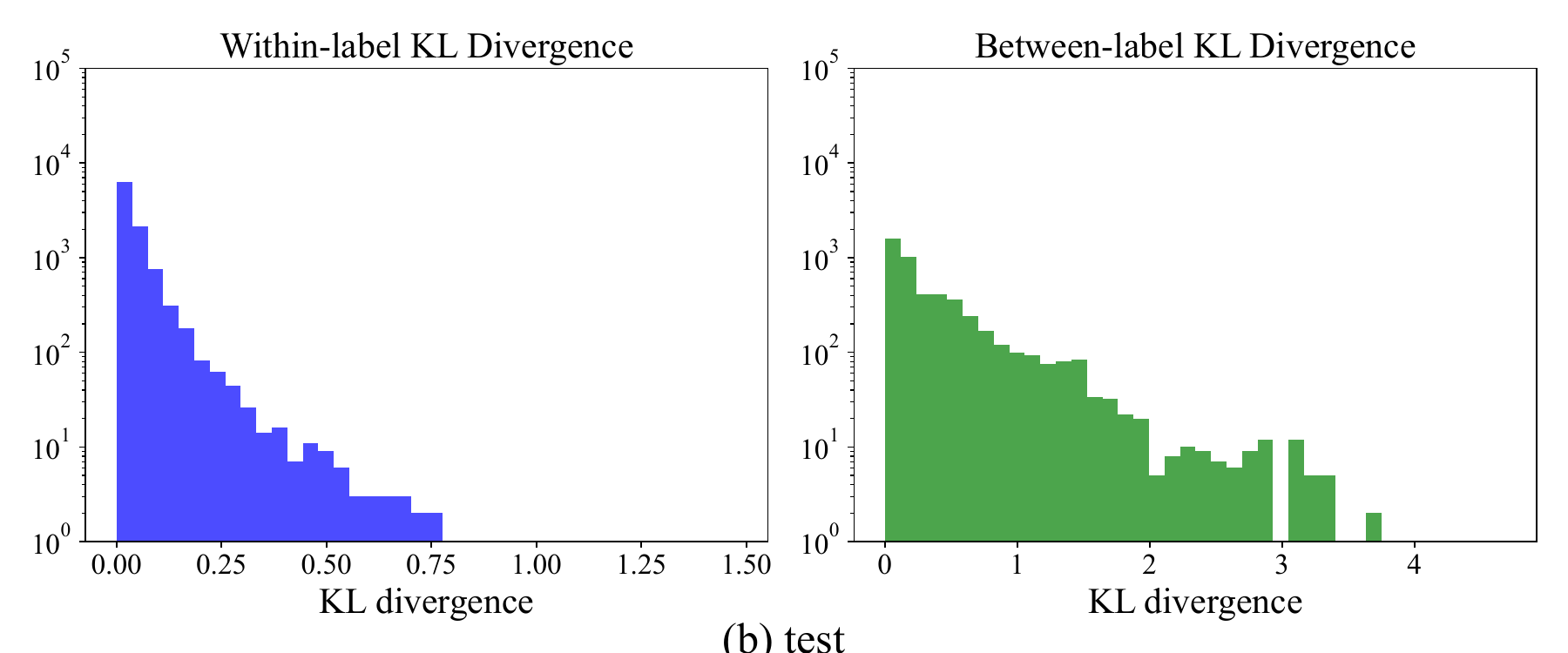}
  \caption{Histograms of symmetric KL divergence of generated mask (before thresholding on the training time) between sample pairs. (a) shows divergences computed from training set samples, and (b) displays divergences from test set samples. All histograms are plotted on a logarithmic scale.}
  \label{fig:supp-hist}
\end{figure*}

\subsection{Empirical Support for Partial Collapse}
\label{sec:partial-evidence}
\rev{\cref{thm:partial} shows that the risk is strictly worse at zero spread for classes below the threshold, and \cref{cor:scarcity} orders their optimal spread by $n_c$.}

% \paragraph{Tail classes retain larger spread.}
We measure the within-class spread $s_c$ (test set for CIFAR100-LT and validation set for iNaturalist-2018) of the trained models, as shown in~\cref{tab:spread}.
On CIFAR100-LT, the spread increases monotonically from Many to Few: the Few-class spread is $1.76\times$ the Many-class spread.
Compared with the cross-entropy baseline (without ETF-AM), ETF-AM retains a larger relative within-class spread on the Few classes (1.76 vs. 1.33), and this larger spread coexists with higher Few-class accuracy. We report relative spreads because the raw values are scale-dependent: on the Many-classes, ETF-AM and CE differ in $s_c$ ($18.4$ vs.\ $172.2$) as their feature norm ($8.6$ vs.\ $26.1$).
This is consistent with the partial-collapse view, in which tail classes benefit from within-class diversity rather than from full collapse.
The same monotonic pattern holds on iNaturalist-2018, where no simplex ETF exists, and the classifier is learned: the Few-class spread is $1.40\times$ the Many-class spread under ETF-AM, against $1.01\times$ for the CE baseline. 

This is the partial-collapse geometry: inter-class separation (NC2) is preserved while within-class collapse (NC1) is relaxed for tail classes.
\cref{thm:partial} acts only on the within-class spread and is agnostic to the
inter-class target. The retained tail spread on both datasets, including the
no-ETF iNaturalist regime, is consistent with it.
\rev{\Cref{sec:supp-ext} tests the zero-spread side with a controlled collapse experiment, and reports the frequency dependence of the spread.}

\subsection{Mask Consistency}
\label{sec:mask-consistency}

To support the dimensionality of collapse controlled by CASA, we evaluate the consistency of the mask generation process by analyzing the symmetric KL divergence and cosine similarity between mask distributions computed from sample pairs. \cref{fig:epoch-KL-Cos} illustrates the epoch-wise evolution of these metrics. The gating layer progressively decreases cosine similarity and increases KL divergence between samples of different classes, while showing the opposite trend within the same class. \rev{Same-class instances activate more similar subsets of nodes than
different-class instances.}

\begin{table}[tb]
\centering\footnotesize
\setlength{\tabcolsep}{15pt}
\caption{Cosine similarity and average KL divergence of generated masks between inter-class and intra-class on CIFAR100-LT.}
\label{tab:supp-mask}
  \begin{tabular}{@{}l  cc cc@{}}
\toprule
& \multicolumn{2}{c}{Cosine Similarity} & \multicolumn{2}{c}{Avg.\ KL Divergence} \\
\cmidrule(lr){2-3}\cmidrule(lr){4-5}
& Inter & Intra & Inter & Intra \\
\midrule
Train & 0.48 & 0.96 & 1.115 & 0.001 \\
Test  & 0.72 & 0.81 & 0.430 & 0.046 \\
\bottomrule
\end{tabular}
% \vspace{-5mm}
\end{table}

\cref{tab:supp-mask} quantifies the consistency of generated masks. On the train set, intra-class pairs reach cosine similarity 0.96 (KL 0.001) versus inter-class 0.48 (KL 1.115). On the test set, intra-class similarity drops to 0.81 (KL 0.046), while inter-class similarity 0.72 (KL 0.430) remains distinct. \rev{The gating layer therefore produces masks with class-related structure on both splits.} The KL-divergence histograms in \cref{fig:supp-hist} further validate this: on both the train and test sets, \rev{same-class pairs concentrate at lower divergence than inter-class pairs on both splits.} The intra-class divergence is slightly higher on the test set, reflecting the distribution shift between the long-tailed training set and the class-balanced test set rather than a loss of mask consistency.
Further discussion about masks and success/failure case analysis is in \cref{sec:supp-ext} of the supplementary material.

\section{Limitations}
\label{sec:limit}

Our method has some limitations.
\rev{(i) When $C>d+1$, a simplex ETF does not exist, so we utilize a learnable classifier head. We keep the same procedure, but $\lVert w_y-w_k\rVert$ is no longer constant across pairs: $\Gamma$ must be tuned, and the clearance holds per pair rather than per class.}
\rev{(ii) \cref{thm:partial} is established under a finite-sample class-conditional model (\cref{sec:supp-ufm}). The positivity of the optimal spread follows from the qualitative form of the assumption alone, while the sample-count dependence is grounded in prior analyses~\cite{galanti2022on,galanti2022generalization}. The cross-class ordering additionally requires $\phi$ to decrease throughout the feasible interval.}
We test the full-collapse-suboptimal side empirically and treat the interior optimum as a model-level prediction; a proof for the trained network remains open. 
(iii) We verify the claim for long-tailed image classification; transfer to other regimes, such as long-tailed detection or segmentation, remains open.

\section{Conclusion}
\label{sec:conclusion}

We reframed long-tailed recognition around controlled partial neural collapse: head classes effectively collapse along a fixed ETF while tail classes retain within-class diversity.
Under a finite-sample class-conditional model, we showed that full collapse fails to minimize risk for sample-scarce classes and that the optimal spread is ordered by class frequency; our measurements are consistent with that ordering.
Two coordinated controls on a backbone reach this regime: \rev{ETF-AM acts on the training-to-inference clearance}, and CASA on its dimensionality.
Our method achieves the best performance on CIFAR100-LT, ImageNet-LT, Places-LT, and iNaturalist-2018.

\section*{Acknowledgement}
This work was supported by the National Research Foundation of Korea (NRF) grant funded by the Korea government (MSIT) (RS-2025-00573160), the Institute of Information \& Communications Technology Planning \& Evaluation (IITP) - Information Technology Research Center (ITRC) grant funded by the Korea government (MSIT) (IITP-2026-RS-2023-00259703),
and the ``Advanced GPU Utilization Support Program'' funded by the Government of the Republic of Korea (Ministry of Science and ICT) (for H. Hwang and D. Kim); 
and by Institute of Information \& communications Technology Planning \& Evaluation (IITP) grant funded by the Korea government (MSIT) (No. RS-2025-02304183, Development of Optimization Code Conversion Technology for Heterogeneous AI Semiconductor-Based Large-Scale Models) (for C. Hur).

{
    \small
    \bibliographystyle{ieeenat_fullname}
    \bibliography{refs}
}

\clearpage

\clearpage
\appendix
\setcounter{page}{1}
\setcounter{theorem}{0}
\setcounter{assumption}{0}
\setcounter{corollary}{0}
\numberwithin{theorem}{section}
\numberwithin{assumption}{section}
\numberwithin{corollary}{section}
% \numberwithin{equation}{section}
% \numberwithin{table}{section}
% \numberwithin{figure}{section}
% \renewcommand{\thetable}{\Alph{table}}
% \renewcommand{\thefigure}{\Alph{figure}}
% \renewcommand{\thetheorem}{\Alph{theorem}}
% \renewcommand{\theassumption}{\Alph{assumption}}
% \renewcommand{\theequation}{\Alph{equation}}
% \renewcommand{\thetheorem}{\Alph{figure}}
\maketitlesupplementary

\section{Finite-Sample Model for the Optimality of Partial Collapse}
\label{sec:supp-ufm}

\rev{This section supports \cref{thm:partial,cor:scarcity} of the main paper.
We first show that the UFM, the standard abstraction for neural collapse, drives every class to \emph{full} collapse at its training optimum (\cref{sec:supp-ufm-train}).
The UFM contains no sampling process and no learned map from inputs to features, so it has no training-to-test gap and cannot express how the risk-optimal geometry depends on the class size.
We therefore analyze the finite-sample class-conditional model introduced in the main paper (\cref{sec:supp-ufm-model}) and prove that, for any class satisfying the sensitivity condition of \cref{eq:sensitivity}, the within-class spread that minimizes per-class risk is strictly positive (\cref{sec:supp-ufm-thm}).
The same section resolves that condition into a threshold on the training count and, under an additional monotonicity condition on $\phi$, orders the optimal spread across classes.}

\subsection{The Training Optimum Under the UFM Is Full Collapse}
\label{sec:supp-ufm-train}
We use the UFM, which treats last-layer features as free variables~\cite{dang2024neural,yang2022inducing}.
The features are $h_{c,i} = \mu_c + \varepsilon_{c,i}$ for class $c = 1, \dots, C$ and sample $i = 1, \dots, n_c$, with counts $n_1 \geq \cdots \geq n_C$, class mean $\mu_c$, zero-mean deviation $\varepsilon_{c,i}$, and within-class spread $s_c = \frac{1}{n_c}\sum_{i=1}^{n_c}\|\varepsilon_{c,i}\|^2$ as in the main paper.
The classifier provides a \emph{maximal-separation} inter-class geometry: a simplex ETF (with $\|w_c\|=1$ and $\langle w_c,w_{c'}\rangle=-\tfrac{1}{C-1}$ for $c\neq c'$) when $C\leq d+1$, which gives a positive separation margin $\gamma >0$ between distinct class means.
The proof uses this margin, not a specific frame.
A per-class budget with some finite radius $R$ couples the mean norm and the spread, $\|\mu_c\|^2 + s_c \leq R^2$.
With the margin cross-entropy of the main paper, the class-$c$ loss decreases as $\mu_c$ aligns with $w_c$ and as the true-class logit $w_c^\top \mu_c = \|\mu_c\|$ grows.
The budget then forces $s_c \to 0$: every unit of spread is spent at the cost of mean norm, and the loss prefers mean norm.
The adaptive margin $\varrho_c$ shifts the loss scale but not this conclusion.
Hence, the UFM training optimum is full collapse, $s_c = 0$, for every class and any imbalance profile.
The result is imbalance-agnostic because the inter-class target is held fixed and the argument acts only on the within-class spread $s_c$.
The imbalance reshapes the inter-class geometry studied elsewhere~\cite{fang2021exploring,thrampoulidis2022imbalance,dang2024neural}, but enters our analysis only through the test-risk dependence on $n_c$, not the training optimum.

\subsection{The Finite-Sample Class-Conditional Model}
\label{sec:supp-ufm-model}
The optimum above concerns the \emph{training} objective on free features; it says nothing about test performance, because the UFM has no data-generating distribution.
To reason about generalization, the main paper adopts a minimal population model for one class $c$ with training count $n_c$, in which the signed test margin of a random class-$c$ sample is
\begin{equation}
\label{eq:supp-margin-model}
\mathcal{M}_c(s_c) = (\gamma - a\,s_c) + Z_{\text{test}} + Z_{\text{est}}(s_c, n_c),
\end{equation}
\rev{with $a > 0$, irreducible test noise $Z_{\text{test}} \sim \mathcal{N}(0, \sigma^2)$ with $\sigma^2>0$, and an independent Gaussian finite-sample term $Z_{\text{est}}$.}
The sample is correctly classified when $\mathcal{M}_c(s_c) > 0$. The term $-a\,s_c$ is the separation cost: a larger within-class spread moves features closer to the boundary.
\rev{The term $Z_{\text{est}}$ is the finite-sample cost of the learned feature map, whose variance is the object of \cref{asm:reg} and \cref{eq:sensitivity}. Relaxing the spread reduces variance at rate $-V_s(0,n_c)$ while paying separation at rate $a$, and the trade favors a positive spread when the former dominates.
The modeling content of \cref{asm:reg} lies in the dependence of $V$ on the training count $n_c$. The finite-sample cost of estimating a class-$c$ feature map grows as $n_c$ shrinks.
This dependence is what the UFM lacks and what makes the analysis sensitive to imbalance.}

\rev{\citet{galanti2022on} establish within-class variability as a key determinant of finite-sample error, with estimates that degrade as the number of available samples shrinks. \citet{hui2022limitations} observe that train-time variability collapse and test accuracy can be anti-correlated, indicating that driving the spread to zero degrades the learned feature map rather than improving it; this is what $\phi'(0)<0$ encodes.}

Two simplifications are built into \cref{eq:supp-margin-model}.
First, the model identifies the test-time spread with the training spread: the same $s_c$ enters both the separation cost and the estimation variance, which assumes the learned feature map reproduces its training spread at test time.
\rev{Second, classes are decoupled, and $a$, $\gamma$, $\sigma$, and $\phi$ are shared across classes; the class count $n_c$ is the varying model parameter in the threshold result. The separation cost $-a\,s_c$ therefore prices only a class's own interference toward the boundary, not that of its neighbors.}
Both simplifications keep the risk separable by class, which makes the closed-form analysis below possible.
\rev{\subsection{Proofs of \texorpdfstring{\cref{thm:partial} and \cref{cor:scarcity}}{Theorem 1 and Corollary 1}}
\label{sec:supp-ufm-thm}
The balanced test risk weights each class equally, so it suffices to minimize the per-class error $\mathcal{R}_c(s_c) = \Pr[\mathcal{M}_c(s_c) \leq 0]$.
We prove \cref{thm:partial} of the main paper, then \cref{cor:scarcity}, which resolves the sensitivity condition into a threshold on the training count.
}
\rev{\begin{theorem}[\cref{thm:partial}, suboptimality of full collapse]
\label{thm:partial-supple}
Let $V$ be as in \cref{asm:reg}. For any class $c$ satisfying \cref{eq:sensitivity}, the per-class risk on $[0,\gamma/a]$ attains its global minimum in the interior. Writing $s_c^\star$ for the smallest global minimizer,
\begin{equation}
s_c^\star>0 \quad\text{and}\quad \mathcal{R}_c(0)>\mathcal{R}_c(s_c^\star).
\end{equation}
\end{theorem}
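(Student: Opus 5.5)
Since $\Phi$ is strictly increasing, minimizing $\mathcal{R}_c(s)=\Phi(-g(s))$ on $[0,\gamma/a]$ is the same as maximizing
\[
g(s)=\frac{\gamma-as}{\sqrt{\sigma^2+V(s,n_c)}},
\]
and we work with $g$ throughout. By \cref{asm:reg}, $\phi$ is differentiable, hence continuous, on the compact interval $[0,\gamma/a]$. The denominator satisfies $\sigma^2+V\ge\sigma^2>0$, so $g$ is continuous there. By Weierstrass, $g$ attains its maximum, so $\mathcal{R}_c$ attains its global minimum. The set of minimizers is closed and nonempty, so a smallest minimizer $s_c^\star$ exists.

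The core step is to exclude both endpoints. At the right endpoint, $g(\gamma/a)=0$, so $\mathcal{R}_c(\gamma/a)=1/2$. At the left endpoint, $g(0)=\gamma/\sqrt{\sigma^2+\phi(0)/n_c}>0$ because $\gamma>0$ and $\phi(0)<\infty$, so $\mathcal{R}_c(0)<1/2$. Hence $\gamma/a$ is strictly worse than $0$ and cannot be a minimizer.

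To exclude $s=0$, compute the one-sided derivative there:
\[
g'(0)=\frac{-a}{\sqrt{\sigma^2+V(0)}}-\frac{\gamma V_s(0)}{2(\sigma^2+V(0))^{3/2}} .
\]
This is positive exactly when $-\gamma V_s(0)>2a(\sigma^2+V(0))$, which is \cref{eq:sensitivity}. Since $\mathcal{R}_c'(0)=-\varphi(g(0))\,g'(0)$, where $\varphi$ is the standard normal density, we get $\mathcal{R}_c'(0)<0$. By the definition of the right derivative, there is $\epsilon>0$ with $\mathcal{R}_c(s)<\mathcal{R}_c(0)$ for all $s\in(0,\epsilon)$. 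So $0$ is not a minimizer.

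Combining the two exclusions, every global minimizer lies in the open interval $(0,\gamma/a)$. In particular the smallest one satisfies $s_c^\star>0$. Because $0$ is not a minimizer, $\mathcal{R}_c(0)>\min\mathcal{R}_c=\mathcal{R}_c(s_c^\star)$, which is the claim.

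The only step needing care is the boundary derivative at $0$. Differentiability of $\phi$ is assumed only on $s\ge0$, so we use the one-sided derivative, and a strictly negative right derivative is enough to give a strict local decrease. No global regularity or convexity is needed, because compactness supplies the minimizer and the endpoint exclusions place it in the interior.
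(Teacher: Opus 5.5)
Your proof is correct and follows essentially the same route as the paper: reduce to maximizing $g$, get existence of a maximizer from compactness, show that $g'(0)>0$ is exactly the sensitivity condition, and use $g(\gamma/a)=0<g(0)$ to place every maximizer in $(0,\gamma/a)$. Your explicit use of the right derivative at $s=0$ is a small tightening: the paper asserts differentiability of $g$ only on the interior but then evaluates $g'(0)$ anyway.
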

}
\rev{\begin{proof}[Proof of \cref{thm:partial}]
Since $Z_{\text{test}}$ and $Z_{\text{est}}$ are independent zero-mean Gaussians, $\mathcal{M}_c(s)$ is Gaussian with mean $\gamma-as$ and variance $D(s)=\sigma^{2}+V(s,n_c)$.
With $\Phi$ the standard Gaussian CDF,
\begin{equation}
\label{eq:supp-risk}
\mathcal{R}_c(s)=\Phi\!\left(-g(s)\right),\qquad g(s)=\frac{\gamma-as}{\sqrt{D(s)}} ,
\end{equation}
and since $\Phi$ is strictly increasing, minimizing $\mathcal{R}_c$ on $[0,\gamma/a]$ is equivalent to maximizing $g$.

By $\sigma^2>0$, \cref{asm:reg} gives $0<D(s)=\sigma^{2}+V(s,n_c)<\infty$ throughout the compact interval $[0,\gamma/a]$. Hence $g$ is continuous on that interval and differentiable in its interior. Differentiating,
\begin{equation}
\label{eq:supp-gprime}
g'(s)=-\,\frac{2a\,D(s)+(\gamma-as)\,V_s(s,n_c)}{2\,D(s)^{3/2}} .
\end{equation}
The denominator is positive, so at $s=0$,
\begin{equation}
g'(0)>0 \iff -V_s(0,n_c)>\frac{2a\left[\sigma^{2}+V(0,n_c)\right]}{\gamma},
\end{equation}
which is exactly \cref{eq:sensitivity}. Let class $c$ satisfy it.

At the two endpoints, $g(0)=\gamma/\sqrt{D(0)}>0$, while $g(\gamma/a)=0$.
Since $g'(0)>0$, there is $\delta>0$ with $g(\delta)>g(0)$.
Since $g$ is continuous on the compact interval, it attains a maximum, and that value exceeds both endpoints. The maximum therefore lies in $(0, \gamma/a)$.
Hence $g$ has an interior global maximizer, equivalently $\mathcal{R}_c$ has an interior global minimizer.

Let $\mathcal{S}(n_c)=\operatorname*{arg\,max}_{s\in[0,\gamma/a]}g(s)$. The preceding argument shows that $\mathcal{S}(n_c)$ is nonempty and contained in $(0,\gamma/a)$. It is a closed subset of a compact interval, so it has a smallest element; define $s_c^\star$ to be that element. Since $\max_s g(s)>g(0)$,
\begin{equation}
\begin{aligned}
s_c^{\star}&>0 \quad\text{and}\\\quad \mathcal{R}_c(s_c^{\star})=\Phi\!\left(-g(s_c^{\star})\right)&<\Phi\!\left(-g(0)\right)=\mathcal{R}_c(0).
\end{aligned}
\end{equation}
Full collapse is therefore strictly suboptimal for any class satisfying \cref{eq:sensitivity}.
The proof of \cref{thm:partial} is finished.
\end{proof}
}
\rev{
\cref{thm:partial} is a per-class existence statement. It applies to whichever classes satisfy \cref{eq:sensitivity} and is silent about the rest. The assumed $1/n_c$ scaling turns this condition into an explicit threshold on the training count.

\begin{corollary}[\cref{cor:scarcity}, scarcity-dependent spread]
\label{cor:scarcity-supple}
    Let $n^\star$ be the sample count at which \cref{eq:sensitivity} changes sign. Then \cref{eq:sensitivity} holds if and only if $0<n_c<n^\star$. If, in addition, $\phi'(s)<0$ for every $s\in(0,\gamma/a)$, then for any two classes $c,c'$ with $n_{c'}<n_c<n^\star$,
    \begin{equation}
    s_c^{\star}<s_{c'}^{\star}.
    \end{equation}
\end{corollary}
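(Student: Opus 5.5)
The plan has two parts. First, substitute the variance model of \cref{asm:reg} into \cref{eq:sensitivity} to turn it into an explicit threshold on $n_c$. Second, get the ordering of the optimal spreads by a monotone comparative-statics argument on $\log g$, where $g$ is as in \cref{eq:supp-risk}. Throughout I treat $n_c$ as a continuous positive parameter; for the ordering I write $t=1/n_c$.

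\emph{Threshold.} With $V(s,n)=\phi(s)/n$ we have $V_s(0,n)=\phi'(0)/n$ and $V(0,n)=\phi(0)/n$. Multiplying \cref{eq:sensitivity} by $n\gamma>0$ turns it into the linear inequality
\begin{equation*}
-\gamma\phi'(0) > 2a\sigma^{2} n + 2a\phi(0).
\end{equation*}
Since $a,\sigma^2>0$, this is equivalent to $n<n^\star$ with
\begin{equation*}
n^\star=\frac{-\gamma\phi'(0)-2a\phi(0)}{2a\sigma^{2}}.
\end{equation*}
This $n^\star$ is exactly where the two sides of \cref{eq:sensitivity} are equal, i.e.\ where the condition changes sign. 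It is positive precisely under the non-vacuity condition $-\gamma\phi'(0)>2a\phi(0)$. This gives the first claim: the condition holds if and only if $0<n_c<n^\star$.

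\emph{Ordering.} Take $n_{c'}<n_c<n^\star$. By the first part, both classes satisfy \cref{eq:sensitivity}. So \cref{thm:partial} applies to each, and both $s_c^\star$ and $s_{c'}^\star$ are interior points of $(0,\gamma/a)$. On $[0,\gamma/a)$ we have $g>0$, so we may work with
\begin{equation*}
L(s,t)=\log(\gamma-as)-\tfrac12\log\bigl(\sigma^{2}+t\phi(s)\bigr).
\end{equation*}
Maximizers of $L(\cdot,t)$ coincide with those of $g$, because $g$ vanishes only at $s=\gamma/a$. A direct computation gives
\begin{equation*}
\partial_t\partial_s L=-\tfrac12\,\frac{\sigma^{2}\phi'(s)}{(\sigma^{2}+t\phi(s))^{2}},
\end{equation*}
which is strictly positive on $(0,\gamma/a)$ under the extra hypothesis $\phi'<0$. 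Hence $L$ has strictly increasing differences in $(s,t)$.

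Set $t=1/n_c$ and $t'=1/n_{c'}$, so $t'>t$. I would argue in two steps.

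\begin{enumerate}
\item For any $s<s_c^\star$, decompose
\begin{equation*}
L(s_c^\star,t')-L(s,t')=\bigl[L(s_c^\star,t)-L(s,t)\bigr]+\int_t^{t'}\!\!\int_s^{s_c^\star}\partial_t\partial_s L\,du\,d\tau .
\end{equation*}
The bracket is $\ge0$ because $s_c^\star$ maximizes $L(\cdot,t)$, and the double integral is $>0$. So no point below $s_c^\star$ can maximize $L(\cdot,t')$.

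\item The point $s_c^\star$ itself is not a maximizer of $L(\cdot,t')$. Interiority gives the first-order condition $\partial_sL(s_c^\star,t)=0$. Integrating the positive cross-partial in $t$ then gives $\partial_sL(s_c^\star,t')>0$, so $L(\cdot,t')$ increases just to the right of $s_c^\star$.
\end{enumerate}

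Together, every maximizer for class $c'$ lies strictly above $s_c^\star$. In particular the smallest one satisfies $s_{c'}^\star>s_c^\star$.

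The main obstacle is getting strictness in the presence of possibly non-unique maximizers. Plain Topkis monotonicity only yields a weak inequality between argmax sets. The two-step argument above handles this: it uses interiority, supplied by \cref{thm:partial} through the threshold, together with the strictly positive cross-partial, supplied by $\phi'<0$ on the whole interval. This is also where the monotonicity hypothesis is genuinely needed, since $\phi'(0)<0$ alone does not control the sign of the cross-partial away from $s=0$.
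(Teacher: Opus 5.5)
Your proposal is correct and follows essentially the same route as the paper. The threshold comes from substituting $V=\phi/n_c$ into the sensitivity condition and rearranging; the ordering comes from strict monotone differences of the log-objective together with the interior first-order condition, which excludes a shared maximizer exactly as in your step~2. The paper works with $\Lambda=\log g^2=2L(s,1/n)$ in the variable $n$, so its strictly decreasing differences in $(s,n)$ are the same as your strictly increasing differences in $(s,t)$, and it checks this by differentiating $\Lambda(s,n)-\Lambda(s',n)$ directly in $n$ using only $\phi(s)<\phi(s')$, rather than integrating the cross-partial over $s$ as in your step~1.
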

}
\rev{\begin{proof}[Proof of \cref{cor:scarcity}]
\emph{Threshold.}
% \emph{Threshold.}
By \cref{asm:reg}, $V(0,n_c)=\phi(0)/n_c$ and $V_s(0,n_c)=\phi'(0)/n_c$,
so \cref{eq:sensitivity} reads
\begin{equation}
-\frac{\phi'(0)}{n_c}>\frac{2a}{\gamma}\left[\sigma^{2}+\frac{\phi(0)}{n_c}\right].
\end{equation}
Multiplying by $\gamma n_c>0$ and rearranging gives
$2a\sigma^{2}n_c<-\gamma\phi'(0)-2a\phi(0)$, i.e.\ $0<n_c<n^\star$ with
\begin{equation}
n^{\star}=\frac{-\gamma\,\phi'(0)-2a\,\phi(0)}{2a\sigma^{2}} .
\end{equation}

\emph{Ordering.} Assume in addition $\phi'(s)<0$ on $(0,\gamma/a)$.
Since $g\ge 0$, maximizing $g$ is equivalent to maximizing
\begin{equation}
F(s,n)=g(s)^{2}=\frac{n(\gamma-as)^{2}}{n\sigma^{2}+\phi(s)} .
\end{equation}
For $n<n^\star$, \cref{thm:partial} places every maximizer in $(0,\gamma/a)$, where $F>0$; on this open interval set $\Lambda(s,n)=\log F(s,n)$. For $s>s'$,
\begin{equation}
    \begin{aligned}
        \frac{\partial}{\partial n}\big[\Lambda(s,n)-\Lambda(s',n)\big] &=\\
        \sigma^{2}\!\left[\frac{1}{n\sigma^{2}+\phi(s')}-\frac{1}{n\sigma^{2}+\phi(s)}\right]&<0 ,
    \end{aligned}
\end{equation}
since $\phi(s)<\phi(s')$ and $\sigma^{2}>0$. Thus $\Lambda$ has strictly decreasing differences in $(s,n)$.

Let $n_{c'}<n_c<n^\star$ and take any $s\in\mathcal{S}(n_c)$ and $s'\in\mathcal{S}(n_{c'})$. If $s>s'$, optimality gives $\Lambda(s,n_c)-\Lambda(s',n_c)\ge 0$ and $\Lambda(s,n_{c'})-\Lambda(s',n_{c'})\le 0$, contradicting strictly decreasing differences. Hence $s\le s'$ for every such pair.

Both maximizers are interior, so each satisfies $-2a/(\gamma-as)=\phi'(s)/(n\sigma^{2}+\phi(s))$. The left side does not depend on $n$ while the right side is strictly monotone in $n$ whenever $\phi'(s)<0$, so no point is optimal for both counts. Therefore $s_c^{\star}<s_{c'}^{\star}$.
The proof of \cref{cor:scarcity} is finished.
\end{proof}
}

\rev{\paragraph{Regarding the condition.}
The threshold and ordering parts have different roles. The threshold identifies the classes for which full collapse is suboptimal. The ordering is the part the design uses, since ETF-AM follows the same monotone ranking in $n_c$. It is also the part we measure (\cref{sec:supp-slope}), while $n^\star$ depends on constants the model leaves unspecified. The threshold uses the sign of $\phi'$ at zero, whereas the ordering requires it on the whole interval $(0,\gamma/a)$.
}

\rev{\paragraph{Scope of the inter-class target.}
The inter-class target is the simplex ETF of \cref{sec:method} when $C\le d+1$. For $C>d+1$ (e.g., iNaturalist-2018 with $C=8{,}142>d=2{,}048$) no simplex ETF exists, and we learn the classifier instead. The proof concerns the within-class spread $s_c$ and holds unchanged for either inter-class target, since it uses only the base margin $\gamma>0$.
}

\rev{\section{Reference Scale Determination Under ETF}
\label{sec:supp-margin}}

In this section, we elaborate on the derivation of the reference scale under the ETF constraint as mentioned in \cref{sec:etfam-margin} of the main paper.

Under the ETF constraint, all class weight vectors $w_c$ lie on the unit hypersphere, and their pairwise inner products satisfy:
\begin{equation}
\langle w_c, w_{c'} \rangle = -\frac{1}{C - 1}, \quad \text{for } c \neq c',
\end{equation}
where $C$ denotes the total number of classes.

Based on this property, the squared Euclidean distance between any two unit-norm vectors can be derived as follows:
\begin{equation}
\|w_c - w_{c'}\|^2 = 1 + 1 - 2\Big({-\frac{1}{C - 1}}\Big) = 2 + \frac{2}{C - 1}.
\end{equation}

Thus, the Euclidean distance between any two distinct class weight vectors is given by:
\begin{equation}
\|w_c - w_{c'}\| = \sqrt{2 + \frac{2}{C - 1}}.
\end{equation}

Assuming that the decision boundary between two class centers is placed at the midpoint, the effective margin (the distance from a class center to the boundary) is:
\begin{equation}
\text{Margin} = \tfrac{1}{2}\sqrt{2 + \tfrac{2}{C - 1}}.
\end{equation}

For a sufficiently large $C$, the term $\tfrac{2}{C-1}$ becomes negligible, and the margin converges to:
\begin{equation}
\text{Margin} = \tfrac{1}{2}\sqrt{2 + \tfrac{2}{C - 1}} \to \tfrac{1}{2}\sqrt{2} \approx 0.7071.
\end{equation}

This derivation shows that $\Gamma$ asymptotically approaches $\tfrac{\sqrt{2}}{2}$, ensuring that the margin adheres to the intrinsic geometric constraints imposed by the ETF configuration.

\begin{table*}[tb]
\centering\small
\caption{\rev{Hyperparameters used in experiments. $\Gamma$ is a fixed value, except for iNaturalist2018, which cannot have an ETF structure}.}
\label{tab:supp-hp}
% \resizebox{\columnwidth}{!}{%
\begin{tabular}{@{}l|cccc@{}}
\toprule
& CIFAR100-LT & ImageNet-LT & Places-LT & iNaturalist2018\\
\midrule
\multicolumn{4}{l}{\emph{Data Settings}} \\ \midrule
\# Classes        & 100 & 1000 & 365 & 8142 \\
Batch Size       & 64  & 128  & 128 & 512 \\
\midrule
\multicolumn{4}{l}{\emph{Optimizer Parameters}} \\ \midrule
Total Epochs     & 320 & 200  & 30 & 200 \\
Optimizer        & SGD & SGD  & SGD & SGD \\
Momentum         & 0.9 & 0.9  & 0.9 & 0.9 \\
Scheduler        & Cos & MultiStep & ReduceLROnPlateau & Cos \\
Base LR          & 0.01 & 0.05 & 0.3 & 0.05 \\
Weight Decay     & 0.005 & 0.00024 & 0.00024 & 0.0005 \\
\rev{$\beta$ (Feature Reg.)}     & \rev{0.01} & \rev{0.0001}& \rev{0.00003} & \rev{0.0001}\\
\rev{LA Coef. }          & \rev{0.65} & \rev{0.2} & \rev{0.35} & \rev{0.15} \\
\rev{$\Gamma$ (Reference Scale)} & \rev{0.71} & \rev{0.71} & \rev{0.71} & \rev{0.9} \\
\midrule
\multicolumn{4}{l}{\emph{Loss Hyperparameters}} \\ \midrule
$\vartheta$ (spike) & 0.5 & 0.5 & 0.3 & 0.5\\
$\varpi$ (slab)    & 0.1 & 0.1 & 0.3 & 0.1\\
\bottomrule
\end{tabular}%
% }
\end{table*}

\rev{
\section{Clearance Derivation Under the ETF Constraint}
\label{sec:supp-clearance}

This section derives the training-to-inference clearance induced by the adaptive margin $\varrho_y$, as used in \cref{sec:etfam} of the main paper.

\paragraph{Two parallel hyperplanes.}
With the logit $z_c=w_c^\top u+b_c$ and the masked feature $u=M\odot h$, the inference rule for a true class $y$ against a class $k$ is $z_y-z_k>0$ as:
\begin{equation}
\label{eq:supp-infer-plane}
(w_y-w_k)^\top u+(b_y-b_k)>0 ,
\end{equation}
whereas subtracting $\varrho_y$ from the true-class logit makes the training condition $z_y-\varrho_y>z_k$, i.e.
\begin{equation}
\label{eq:supp-train-plane}
(w_y-w_k)^\top u+(b_y-b_k)-\varrho_y>0 .
\end{equation}
The two hyperplanes share the normal vector $w_y-w_k$ and are therefore parallel; they differ only in their offsets. The Euclidean distance between them is
\begin{equation}
\label{eq:supp-clearance-pair}
\Delta_{yk}=\frac{\varrho_y}{\lVert w_y-w_k\rVert} .
\end{equation}
We call $\Delta_{yk}$ the \emph{clearance}: a feature meeting \cref{eq:supp-train-plane} sits at least $\Delta_{yk}$ from the inference boundary of \cref{eq:supp-infer-plane} once the margin is removed.

\paragraph{Independence from the competitor.}
Under the ETF constraint, $\lVert w_y-w_k\rVert$ is the same for every pair (\cref{eq:dist}), and by \cref{eq:maxmargin} it equals $2\Gamma$. Hence \begin{equation}
\label{eq:supp-clearance-y}
\Delta_{yk}=\Delta_y=\frac{\varrho_y}{2\Gamma} ,
\end{equation}
which depends solely on the true class. On a learnable classifier head, $\lVert w_y-w_k\rVert$ varies by pair and over training, so the same logit margin $\varrho_y$ maps to a clearance that is neither constant across competitors nor stable during optimization.

\paragraph{Clearance under gating.}
Both hyperplanes in \cref{eq:supp-infer-plane,eq:supp-train-plane} are defined in $u$-space, and the mask enters only through $u=M\odot h$. The normal is $w_y-w_k$, whose length the ETF fixes at $2\Gamma$ irrespective of $M$, so the clearance of \cref{eq:supp-clearance-y} is unchanged by gating: the mask selects which coordinates carry the feature, not the classifier geometry from which $\Delta_y$ is derived.

\paragraph{Scope of the clearance.}
$\Delta_y$ is a nominal training-to-inference buffer. It bounds the boundary-normal variation a feature can absorb between the two rules, but it does not set the learned cloud radius or guarantee a particular within-class spread.}

\begin{table*}[tbh]

\centering \small
\caption{Extended ablation study on CIFAR100-LT: FDR on train and test sets, and Top-1 Accuracy for Many, Medium, Few, and overall Average classes.}
\setlength{\tabcolsep}{8pt}

\begin{tabular}{@{}l|cc|cccc@{}}
\toprule
Method & FDR (Tr) & FDR (Te) & Many & Med & Few & Avg. \\
 \midrule
$ \vartheta{=}0.5,\varpi{=}0.1$
%,s {=} 1.0$, \Gamma {=} 0.71$
& $9.94\times10^4$           & ${1.28\times10^2}$ & 74.71 & \textbf{55.77} & \textbf{31.65}    & \textbf{{54.73}} \\ \midrule
% w/o ETF                               & $3.00\times10^4$         & $7.04\times10^1$          & 36.82           & 34.31              & \textbf{38.32} & 36.41              \\
w/o ETF-AM                                       & $3.36\times10^4$         & $1.23\times10^2$          & \textbf{74.88}  & 53.34              & $26.74$          & 52.42              \\
w/o Gating                                 & $9.46\times10^4$         & $1.24\times10^2$          & 74.20           & 54.10              & 30.60              & 53.60              \\ \midrule
$\vartheta{=}0.9,\varpi{=}0.1$                         & $9.65\times10^4$           & $1.24\times10^2$          & 73.00           & 55.48              & 31.29              & 53.56              \\
$\vartheta{=}0.7,\varpi{=}0.1$                         & $9.76\times10^4$           & $1.24\times10^2$          & 73.52           & 55.26              & 31.48              & 54.10              \\
$\vartheta{=}0.3,\varpi{=}0.1$                         & $9.80\times10^4$           & $1.25\times10^2$          & 73.85           & 54.22              & 31.73              & 54.22              \\
$\vartheta{=}0.1,\varpi{=}0.1$                         & {$9.84\times10^4$}           & $1.25\times10^2$          & 74.08           & 54.59              & {32.12}              & 54.30              \\ 
$\vartheta{=}0.5,\varpi{=}0.9$                         & $9.29\times10^4$           & $1.24\times10^2$          & 71.43           & 52.40              & 28.97              & 51.22              \\
$\vartheta{=}0.5,\varpi{=}0.7$                         & $9.32\times10^4$           & $1.24\times10^2$          & 72.61           & 53.40              & 29.87              & 52.19              \\
$\vartheta{=}0.5,\varpi{=}0.5$                         & $9.48\times10^4$           & $1.25\times10^2$          & 73.32           & 54.21              & 30.82              & 53.24              \\
$\vartheta{=}0.5,\varpi{=}0.3$                         & $9.47\times10^4$           & {$1.27\times10^2$}          & 73.55           &{55.60}             & 31.90              & 53.72              \\ \bottomrule
\end{tabular}

% \vspace{-6pt}

\label{tab:supp-cifar100_abl}
\end{table*}

\begin{table*}[tbh]
\centering\small
\caption{Extended Ablation study on ImageNet-LT with ResNeXt50: FDR on train and test sets, and Top-1 Accuracy for Many, Medium, Few, and overall Average classes. \textbf{Bold} marks the best.}
\label{tab:supp-imagenet-abl}
\setlength{\tabcolsep}{8pt}
\begin{tabular}{@{}l|cc|cccc@{}}
\toprule
Method & FDR (Tr) & FDR (Te) & Many & Med & Few & Avg. \\
\midrule
$\vartheta{=}0.5,\varpi{=}0.1$ & $9.19{\times}10^2$ & $7.25{\times}10^2$ & \textbf{65.20} & \textbf{55.35} & 38.87 & \textbf{56.90} \\ \midrule
% w/o ETF                        & $4.45{\times}10^2$ & $1.27{\times}10^2$ & 57.62 & 28.15 & 3.11  & 36.09 \\
w/o ETF-AM                     & $5.20{\times}10^2$ & $6.16{\times}10^2$ & 65.17 & 52.19 & 27.52 & 53.83 \\
w/o Gating                     & $8.89{\times}10^2$ & $7.02{\times}10^2$ & 64.54 & 52.87 & 30.73 & 54.34 \\ \midrule
$\vartheta{=}0.9,\varpi{=}0.1$ & $8.42{\times}10^2$ & $6.11{\times}10^2$ & 64.05 & 54.81 & 36.76 & 55.91 \\
$\vartheta{=}0.7,\varpi{=}0.1$ & $8.54{\times}10^2$ & $6.42{\times}10^2$ & 62.25 & 53.63 & 34.33 & 54.32 \\
$\vartheta{=}0.3,\varpi{=}0.1$ & $8.64{\times}10^2$ & $6.62{\times}10^2$ & 63.24 & 54.73 & 32.91 & 55.03 \\
$\vartheta{=}0.1,\varpi{=}0.1$ & $8.97{\times}10^2$ & $6.85{\times}10^2$ & 64.98 & 55.12 & \textbf{38.71} & 56.68 \\
$\vartheta{=}0.5,\varpi{=}0.9$ & $8.21{\times}10^2$ & $6.53{\times}10^2$ & 60.33 & 53.29 & 32.35 & 53.15 \\
$\vartheta{=}0.5,\varpi{=}0.7$ & $8.35{\times}10^2$ & $6.62{\times}10^2$ & 62.48 & 53.78 & 33.01 & 54.30 \\
$\vartheta{=}0.5,\varpi{=}0.5$ & $8.50{\times}10^2$ & $6.75{\times}10^2$ & 63.27 & 54.18 & 35.27 & 55.10 \\
$\vartheta{=}0.5,\varpi{=}0.3$ & $9.10{\times}10^2$ & $7.18{\times}10^2$ & 64.52 & 55.20 & 38.96 & 56.52 \\
\bottomrule
\end{tabular}
\end{table*}

\begin{figure*}[t]
  \centering
  \includegraphics[width=0.7\linewidth]{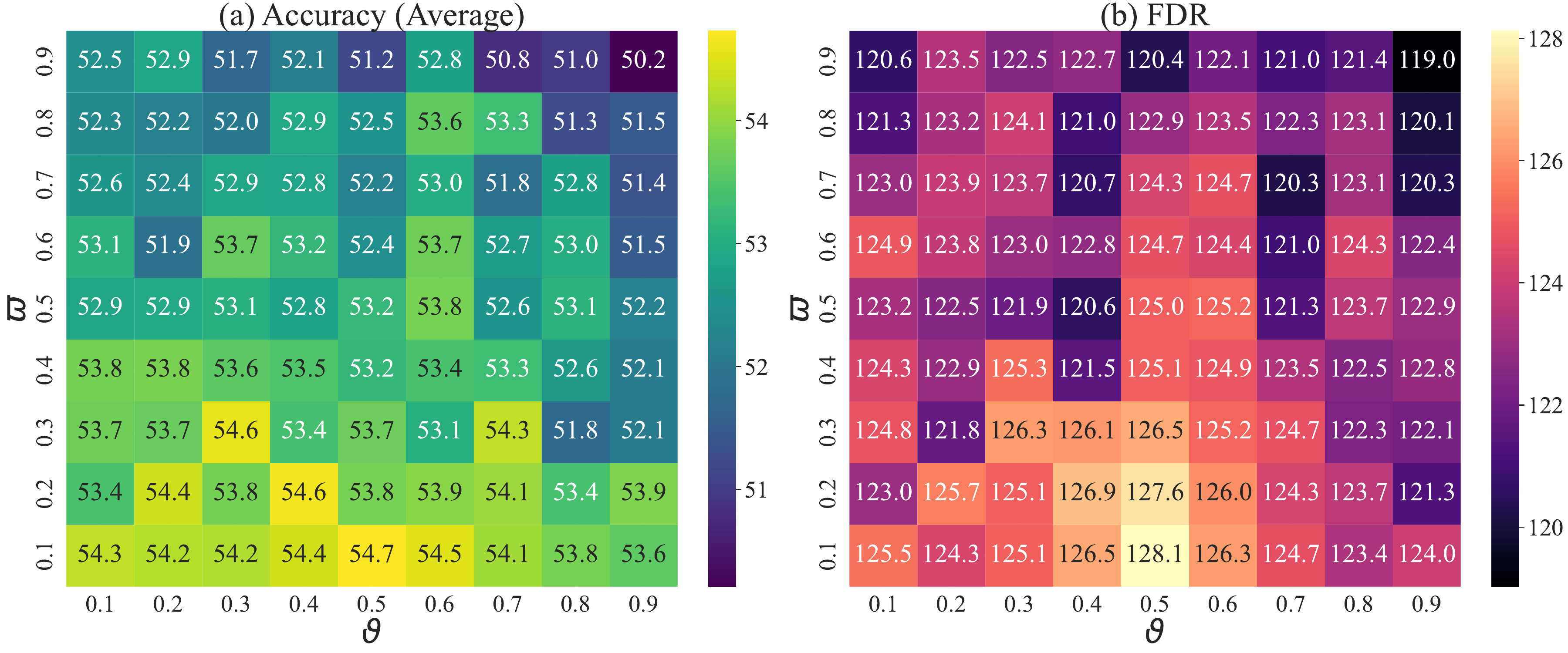}
  \caption{Hyperparameter ($\vartheta$ and $\varpi$) heatmap on CIFAR100-LT. (a) Average accuracy on the test set, (b) FDR score on the test set.}
  \label{fig:supp-heat}
\end{figure*}

\section{Detailed Formulation of Evaluation Metrics}
\label{sec:supp-metrics}

\subsection{Fisher Discriminant Ratio}
Given $N$ samples $x_i \in \mathbb{R}^D$ and $C$ classes, where $n_c$ is the number of samples in class $c$ and $\mu_c$ is the mean feature of class $c$, we define the following:
\begin{align}
S_B &= \sum_{c=1}^{C} n_c (\mu_c - \mu)(\mu_c - \mu)^\top,\\
S_W &= \sum_{c=1}^{C} \sum_{i \in I_c} (x_i - \mu_c)(x_i - \mu_c)^\top,
\end{align}
where $\mu$ is the overall mean of all samples.

FDR is defined as the ratio of the trace of the between-class scatter to the trace of the within-class scatter:
\begin{equation}
\text{FDR} = \frac{\text{tr}(S_B)}{\text{tr}(S_W)}.
\end{equation}

A high FDR indicates that the features are more compact within each class (lower $S_W$) and more separated between classes (higher $S_B$), which is a characteristic of effective NC. We use FDR to verify that inter-class separation is preserved, not to argue for uniform within-class compactness. Because tail classes are sample-scarce, the tail spread we deliberately retain contributes little to the overall tr($S_W$).

\subsection{Metrics for Mask Consistency}
Given two mask distribution vectors $m_i$ and $m_j$, the cosine similarity is computed as:
\begin{equation}
\text{CosSim}(m_i, m_j) = \frac{m_i \cdot m_j}{\|m_i\|\,\|m_j\|}.
\end{equation}

The symmetric KL divergence is defined as the sum of the Kullback-Leibler divergences \cite{kullback1951information} in both directions:
\begin{equation}
\text{KL}_{\text{sym}}(m_i, m_j) = \text{KL}(m_i \| m_j) + \text{KL}(m_j \| m_i),
\end{equation}
where, for two probability distributions $P$ and $Q$,
\begin{equation}
\text{KL}(P \| Q) = \sum_{k} P(k) \log \frac{P(k)}{Q(k)}.
\end{equation}

These metrics are computed for pairs of samples within the same class (intra-class) and across different classes (inter-class) to evaluate the degree of similarity in the generated mask distributions.

\section{Training Settings}
\label{sec:supp-train}

For our experimental environment, we use an Ubuntu 22.04 server with an Intel i9-10900X CPU, 64GB of RAM, and an NVIDIA 5090 GPU with CUDA 13.0, and an Ubuntu 22.04 server with two AMD EPYC 9654 CPUs, 512GB of RAM, and four NVIDIA PRO6000 Workstation Edition GPUs with CUDA 13.0. All experiments are implemented using PyTorch 2.10.0 with Python 3.13.

Detailed hyperparameters are provided in \cref{tab:supp-hp}. For Places-LT ($\rho=996$), we set $(\vartheta,\varpi)=(0.3,0.3)$. This setting activates more weights for the Few classes, thereby enhancing their feature representation and overall performance under extreme imbalance.

\section{Extended Experiments and Analysis}
\label{sec:supp-ext}

\rev{\subsection{Frequency Dependence of the Within-Class Spread}
\label{sec:supp-slope}

\Cref{cor:scarcity} gives a model-level threshold and, under the additional monotonicity condition, an ordering of the risk-optimal spreads by class frequency. The ordering predicts that the learned spread decreases as $n_c$ increases. We test this descriptive implication using the backbone features. For a split $\mathcal{S}$, let $N_c^{\mathcal{S}}$ denote the number of its class-$c$ samples. We compute
\begin{equation}
    \bar h_c^{\mathcal{S}}
    =\frac{1}{N_c^{\mathcal{S}}}\sum_{i\in\mathcal{S}:y_i=c}h_i,
    \label{eq:supp-empirical-feat}
\end{equation}
\begin{equation}
    s_c^{\mathcal{S}}
    =\frac{1}{N_c^{\mathcal{S}}}\sum_{i\in\mathcal{S}:y_i=c}
    \|h_i-\bar h_c^{\mathcal{S}}\|_2^2.
    \label{eq:supp-empirical-spread}
\end{equation}
For both the training and Test/Val. set, the predictor is the training count $n_c$, not the number of samples in the evaluation split. Specifically, with $x_c=\log n_c$ and $y_c^{\mathcal{S}}=\log s_c^{\mathcal{S}}$, the reported slope is the unweighted Ordinary Least Squares (OLS) coefficient
\begin{equation}
\widehat{\beta}_{\mathcal{S}}
=\frac{\sum_{c\in\mathcal{C}_{\mathcal{S}}}(x_c-\bar x_{\mathcal{S}})
(y_c^{\mathcal{S}}-\bar y_{\mathcal{S}})}
{\sum_{c\in\mathcal{C}_{\mathcal{S}}}(x_c-\bar x_{\mathcal{S}})^2},
\label{eq:supp-ols-slope}
\end{equation}
where $\bar x_{\mathcal{S}}$ and $\bar y_{\mathcal{S}}$ are the corresponding means over the set of classes $\mathcal{C}_{\mathcal{S}}$. Thus, each class contributes one equally weighted observation. We use natural logarithms; the fit includes 100 classes on CIFAR100-LT and 8,142 classes on iNaturalist. As \cref{tab:supp-slope} shows, all four slopes are negative, and every split retains Many\,$<$\, Medium\,$<$\, Few.

\begin{table}[tb]
\centering\footnotesize
\caption{\rev{Log--log OLS slope of class-wise within-class spread $s_c$ against the training count $n_c$. All slopes are negative on both splits.}}
\label{tab:supp-slope}
\begin{tabular}{@{}l|cc@{}}
\toprule
Dataset & Train & Test / Val. \\
\midrule
CIFAR100-LT      & $-0.602$ & $-0.249$ \\
iNaturalist-2018 & $-0.120$ & $-0.124$ \\
\bottomrule
\end{tabular}
\end{table}

We compare signs and orderings rather than magnitudes. The training and test splits are differently distributed. The model also identifies the training and test spread (\cref{sec:supp-ufm-model}), a simplification we adopt for tractability. The iNaturalist slopes agree closely across splits, though they are smaller in magnitude than on CIFAR.}

\rev{\subsection{Controlled Collapse Intervention}
\label{sec:supp-lambda}
To examine the effect of stronger within-class compression, we add a penalty that contracts Few-class backbone features toward their running class means,

\begin{equation}
\mathcal{L}_{\mathrm{col}}
=\frac{1}{|\mathcal{B}|}\sum_{i\in\mathcal{B}}
\mathbb{I}[\,n_{y_i}\le 20\,]\,
\|h_i-\widetilde{\mu}_{y_i}\|_2^{2},
\label{eq:supp-collapse-penalty}
\end{equation}
and train with $\mathcal{L}+\lambda\mathcal{L}_{\mathrm{col}}$, varying $\lambda$. Here $\mathcal{B}$ is the mini-batch, and $\widetilde{\mu}_c$ is a running mean of class $c$ with momentum $0.9$ initialized at the first batch mean containing the class. The penalty acts on the raw backbone feature $h$.

For every seed, each $\lambda>0$ setting yields substantially lower normalized Few-class spread on the clean training set than $\lambda=0$, although the training-set spread is not monotonic across the three nonzero values. On the test set, both normalized spread and post-LA Few-class accuracy decrease monotonically with $\lambda$ for all three seeds. From $\lambda=0$ to $0.3$, normalized Few-class spread decreases by $64.5{\pm}17.8\%$ on the clean training set and $19.7{\pm}1.1\%$ on the test set. Over the same change, post-LA Few-class accuracy decreases by $3.69{\pm}1.35$\, points, while training balanced accuracy remains at least $99.6\%$. We therefore view this sweep as a one-sided stress test: increasing the contraction penalty from the $\lambda=0$ baseline reduces test spread but degrades post-LA Few-class performance. The $\lambda=0.3$ setting does not enforce exact $s_c=0$, and the sweep does not identify an optimum or show that spread alone causes the accuracy change.

\begin{table}[tb]
\centering\footnotesize
\caption{\rev{Three-seed sweep of the Few-class contraction penalty with ResNet-34 on CIFAR100-LT. Each entry reports the mean$\pm$sample SD.}}
\label{tab:collapse_penalty}
\begin{tabular}{@{}cccc@{}}
\toprule
$\lambda$ & \shortstack{Train $s_c$ ($\times10^{-3}$)}
& \shortstack{Test $s_c$}
& \shortstack{Few Acc. (\%)} \\
\midrule
$0.0$ & $4.045{\pm}0.33$ & $0.3325{\pm}0.0033$ & $33.34{\pm}1.29$ \\
$0.1$ & $0.991{\pm}0.04$ & $0.2925{\pm}0.0048$ & $32.15{\pm}0.76$ \\
$0.2$ & $0.855{\pm}0.03$ & $0.2748{\pm}0.0045$ & $30.62{\pm}1.45$ \\
$0.3$ & $1.440{\pm}0.74$ & $0.2670{\pm}0.0014$ & $29.66{\pm}1.68$ \\
\bottomrule
\end{tabular}
\end{table}}

\subsection{Extended Ablation Study}
To further validate this trend on a larger-scale benchmark, we provide extended ablation results on CIFAR100-LT and ImageNet-LT. \cref{tab:supp-cifar100_abl,tab:supp-imagenet-abl} report FDR scores and top-1 accuracy for Many, Medium, and Few classes across various settings.

In both datasets, increasing either $\vartheta$ or $\varpi$ reduces mask sparsity, resulting in denser mask outputs and reduced node specialization. This, in turn, leads to lower FDR scores and decreased overall classification accuracy, with the model's behavior approaching performance without a gating layer.

For example, on ImageNet-LT, as $\vartheta$ increases from 0.5 to 0.9, FDR drops from $9.19\times10^2$ to $8.42\times10^2$, and average accuracy decreases from 56.90\% to 55.91\%. Increasing $\varpi$ from 0.1 to 0.9, while fixing $\vartheta$, further decreases FDR to $8.21\times10^2$ and lowers accuracy to 53.15\%.

\cref{fig:supp-heat} illustrates the hyperparameter sweep for $\vartheta$ and $\varpi$ on the CIFAR100-LT dataset. The results are consistent with the extended ablation study. Based on our experiments, we recommend the following approach for tuning $\vartheta$ and $\varpi$ on a new dataset. First, fix the slab variance $\varpi$ to a small value (e.g., 0.1). Second, search for the optimal $\vartheta$ in a range like $[0.1, 0.3, 0.5, 0.7]$. Finally, with the best $\vartheta$ fixed, perform a fine-grained search for $\varpi$ around the initial value (e.g., $[0.05, 0.1, 0.3]$).

\rev{\subsection{Mask Mode at Training and Inference}
\label{sec:supp-maskmode}
Training uses hard masks while inference uses the soft sigmoid gate. To check the impact of gate design, we conduct all four combinations on CIFAR100-LT (\cref{tab:maskmode}). The learned geometry is set by the training mode alone: hard training raises train FDR from $6.97\times10^{4}$ to $9.94\times10^{4}$ and test FDR from $1.21\times10^{2}$ to $1.28\times10^{2}$, while the inference mode leaves both unchanged. Accuracy depends on both, and our pairing is best overall; soft training costs more than two points, with the loss concentrated on Medium and Few.

\begin{table}[tb]
\setlength{\tabcolsep}{4pt}
\centering\footnotesize
\caption{\rev{Mask mode at training and inference on CIFAR100-LT. FDR is determined by the training mode alone. Bold marks the best; underline the second-best.}}
\label{tab:maskmode}
\begin{tabular}{@{}l|cc|cccc@{}}
\toprule
Method & FDR (Tr) & FDR (Te) & Many & Med. & Few & Avg. \\
\midrule
Soft / Soft & $6.97{\times}10^{4}$ & $1.21{\times}10^{2}$ & \textbf{77.53} & \underline{52.03} & 25.10 & 52.35 \\
Soft / Hard & $6.97{\times}10^{4}$ & $1.21{\times}10^{2}$ & 76.18 & 50.06 & 24.55 & 51.03 \\
Hard / Hard & $9.94{\times}10^{4}$ & $1.28{\times}10^{2}$ & \underline{75.34} & 52.01 & \underline{29.45} & \underline{53.41} \\
Hard / Soft (Ours) & $9.94{\times}10^{4}$ & $1.28{\times}10^{2}$ & 74.71 & \textbf{55.77} & \textbf{31.65} & \textbf{54.73} \\
\bottomrule
\end{tabular}
\end{table}}

\rev{\subsection{Residual Mask Structure}
\label{sec:supp-residual}

The mask similarities in \cref{tab:supp-mask} are inflated by a component shared across the entire dataset: the gate learns a common activation pattern before differentiating classes. To isolate the class-related part, we subtract the dataset-mean mask $\bar{m}=\frac{1}{N}\sum_i m_i$ and recompute cosine similarity on the residuals $m_i-\bar{m}$.

\begin{table}[tb]
\centering\footnotesize
\caption{\rev{Residual cosine similarity after removing the dataset-mean mask, CIFAR100-LT.}}
\label{tab:residual}
\begin{tabular}{@{}l|cc@{}}
\toprule
 & Intra-class & Inter-class \\
\midrule
Train & 0.986 & 0.254 \\
Test  & 0.383 & 0.015 \\
\bottomrule
\end{tabular}
\end{table}

As \cref{tab:residual} shows, the intra/inter gap is on both splits. The test-set values are much smaller: the gate is conditioned on the feature, so unseen samples inherit class structure only to the extent the feature map transfers it. The structure is thus a statistical property of the learned map, not a constraint enforced by the objective, and we claim only an empirically class-related residual structure.}

\subsection{Qualitative Result}
\cref{fig:supp-conf} shows a confusion matrix on the test set of CIFAR100-LT ($\rho=100$). Classes are sorted in descending order by the number of samples per class. In the head classes, it shows almost perfect results. In the tail classes, there are some notable misclassifications, such as `streetcar'  vs.\ `bus' and `whale' vs.\ `dolphin', where the former belong to tail classes and the latter to head classes. This error is more likely due to semantic overlap within the class itself or in the background, rather than to the long-tailed property.

\paragraph{Failure Case Analysis.}
\label{sec:supp-mask}
\cref{fig:supp-masksim} analyzes two visually similar Few/Many pairs (`streetcar'/`bus', 11/262 training samples; `whale'/`dolphin', 6/119), split into success and failure cases. For correctly classified samples, the Few-class masks are distinct from the Many counterpart (\cref{fig:supp-masksim}a), so the gate assigns them distinguishable node patterns; misclassified samples show high mask similarity to the Many class, indicating failure from visual overlap rather than long-tailed bias. Removing ETF-AM increases this similarity (\cref{fig:supp-masksim}b), so the adaptive margin supplies the separation the gate needs, consistent with the Few-class drop when ETF-AM is removed as in \cref{sec:ablation}.

\begin{figure*}[p]
  \centering
  \includegraphics[width=\linewidth]{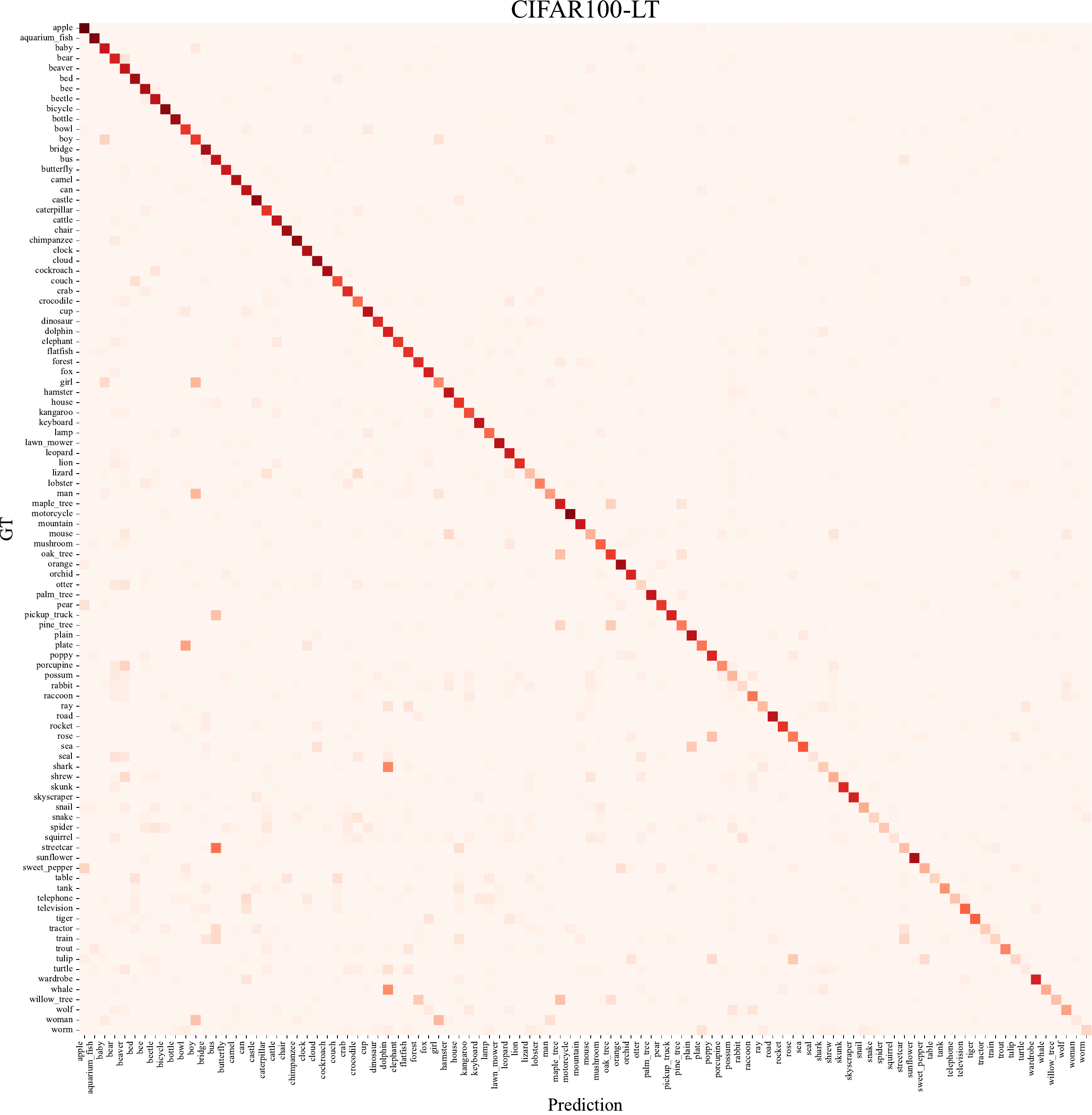}
  \caption{Confusion matrix for CIFAR100-LT ($\rho=100$). Classes are sorted in descending order by the number of training samples per class.}
  \label{fig:supp-conf}
\end{figure*}

% \clearpage

\begin{figure*}[tbh]
  \centering
  \includegraphics[width=0.4\linewidth]{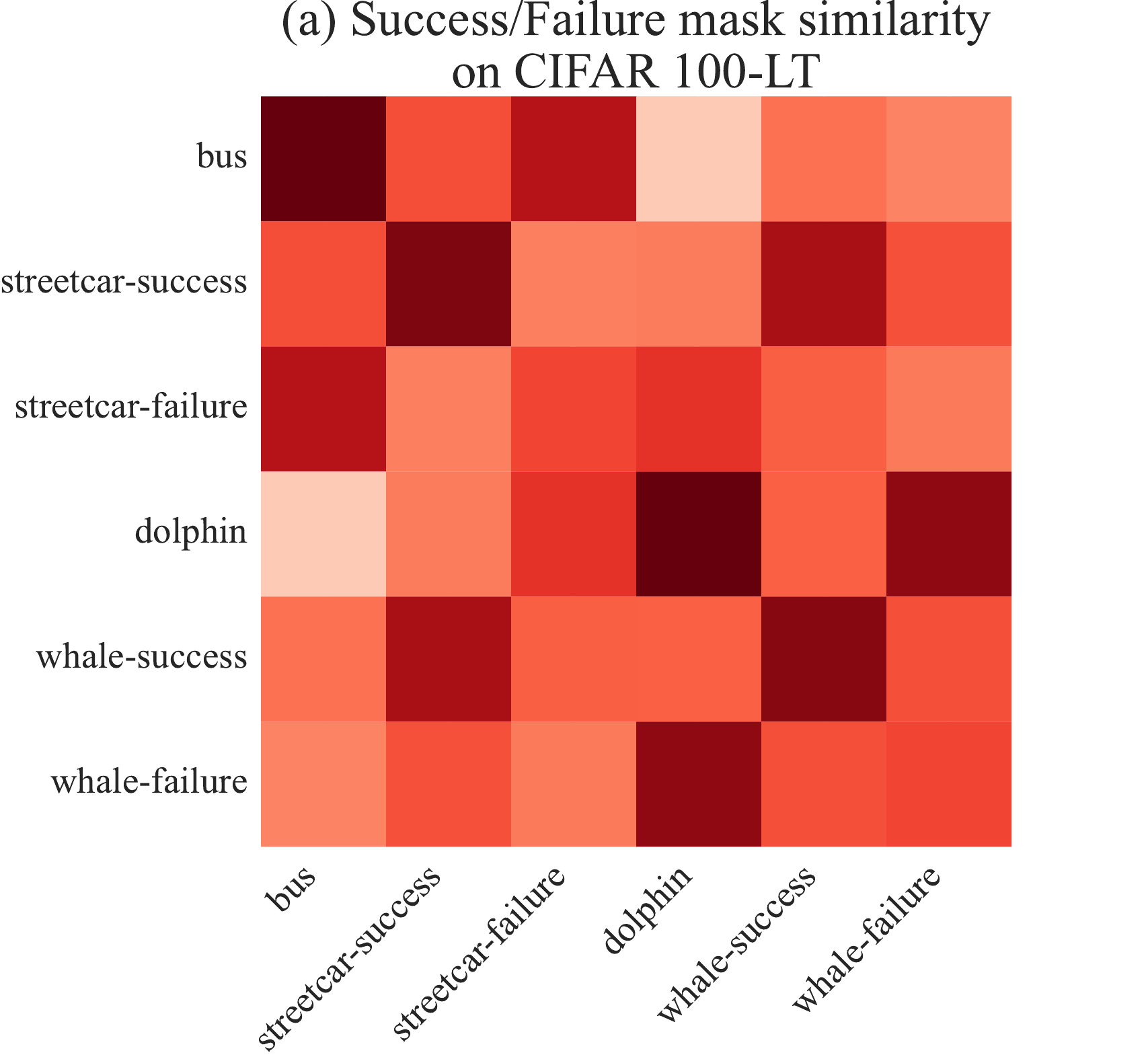}
  \includegraphics[width=0.41\linewidth]{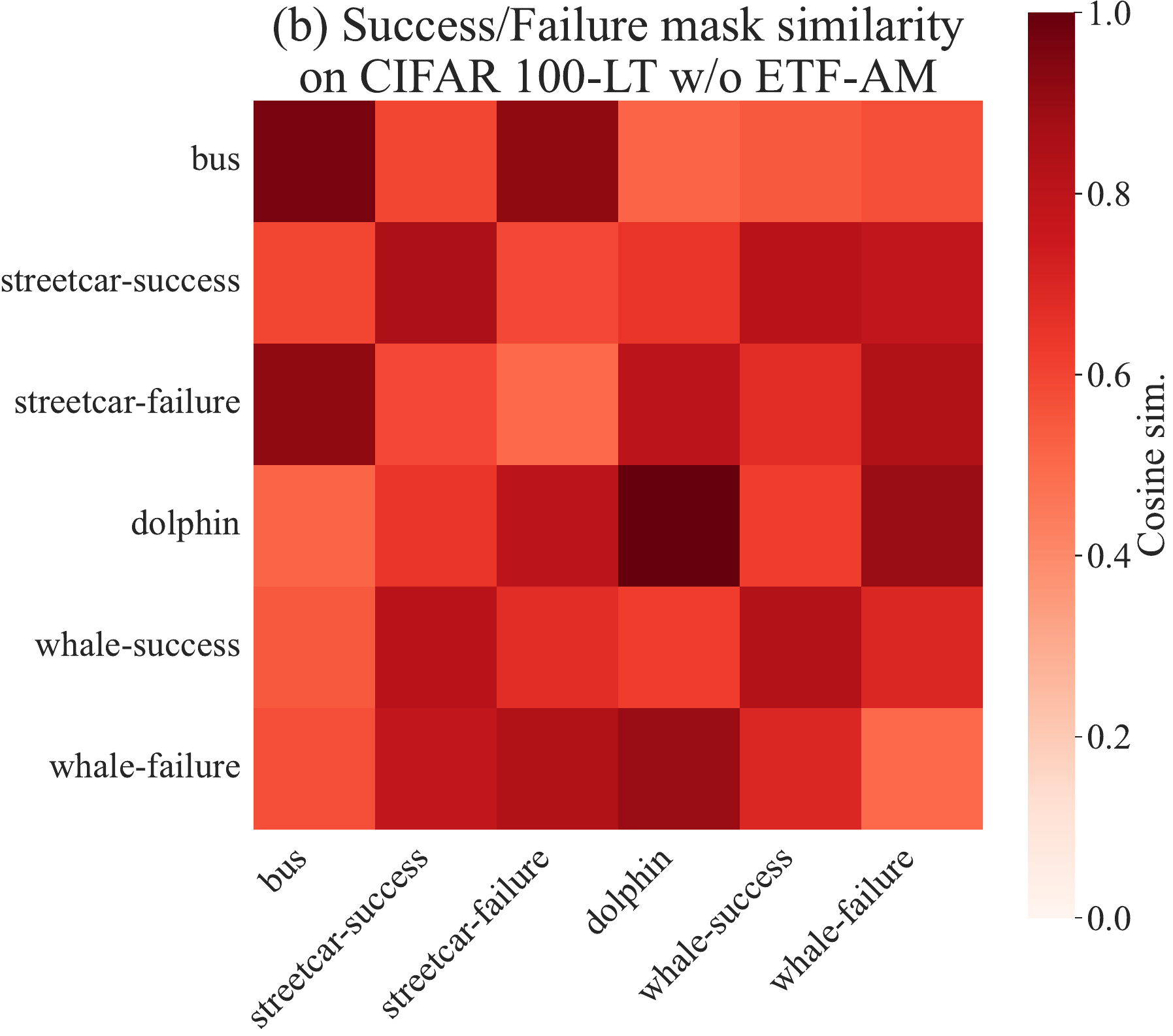}
  \caption{Generated mask similarity on the CIFAR100-LT test set for success and failure cases (`streetcar' (11 training samples) vs.\ `bus' (262) and `whale' (6) vs.\ `dolphin' (119)): (a) the proposed method, (b) the proposed method w/o ETF-AM loss.}
  \label{fig:supp-masksim}
\end{figure*}

\end{document}